\documentclass[12pt]{article}

\usepackage{setspace}
\usepackage{xcolor}
\usepackage[final]{graphicx}
\usepackage{amsfonts}
\usepackage{bbm}
\usepackage{amsthm}
\usepackage{listings}
\usepackage{amsmath, amssymb}
\definecolor{dkgreen}{rgb}{0,0.6,0}
\definecolor{gray}{rgb}{0.5,0.5,0.5}
\definecolor{mauve}{rgb}{0.58,0,0.82}
\usepackage{fancyhdr}
\usepackage{url}
\usepackage{enumitem}
\usepackage{multirow}
\usepackage{shuffle}
\usepackage{lastpage}
\usepackage[font=small,labelfont=bf]{caption}
\usepackage{float}
\usepackage[caption = false]{subfig}
\usepackage{slashbox}
\usepackage{bm} 

\usepackage{arxiv}

\usepackage{afterpage}
\usepackage[pdfpagelabels]{hyperref}
	\hypersetup{
		colorlinks   = true,
		linkcolor = blue,
		urlcolor    = blue
		}





\newtheorem{defn}{Definition}[section]
\newtheorem{thm}{Theorem}[section]
\newtheorem{lem}{Lemma}[section]
\newtheorem{rem}{Remark}[section]
\newtheorem{prop}{Proposition}[section]
\newtheorem{exam}{Example}[section]

\newcommand{\E}{\mathbb{E}}
\newcommand{\p}{\mathbb{P}}

\newcommand{\R}{\mathbb{R}}

\newcommand{\V}{\mathcal{V}}
\newcommand{\N}{\mathbb{N}}

\DeclareMathOperator*{\argmax}{arg\,max}

\numberwithin{equation}{section}

\graphicspath{{./figures/} }

\pagestyle{fancy}
\fancyhf{}
\fancyfoot[C]{\thepage}

\title{Convolutional Signature for Sequential Data
\thanks{\, Work is supported by NSF grant DMS-2008427 for the second author.}}
\fancypagestyle{plain}{
\fancyhead{}

}

\author{\and Ming Min \thanks{Department of Statistics and Applied Probability, South Hall, University of California, Santa Barbara, CA 93106, USA (E-mail: \href{mailto:m_min@pstat.ucsb.edu}{m\_min@pstat.ucsb.edu}).}  \and Tomoyuki Ichiba \thanks{Department of Statistics and Applied Probability, South Hall, University of California, Santa Barbara, CA 93106, USA (E-mail: \href{mailto:ichiba@pstat.ucsb.edu}{ichiba@pstat.ucsb.edu}).}
}

\date{\vspace{-5ex}}

\begin{document}




\maketitle

\begin{abstract}
Signature is an infinite graded sequence of statistics known to characterize geometric rough paths. While the use of the signature in machine learning is successful in low dimensional cases, it suffers from the curse of dimensionality in high dimensional cases, as the number of features in the truncated signature transform grows exponentially fast. With the idea of Convolutional Neural Network, we propose a novel neural network to address this problem. Our model reduces the number of features efficiently in a data dependent way. Some empirical experiments including high dimensional financial time series classification and natural language processing are provided to support our convolutional signature model. 
\keywords{Signature \and Rough Paths \and  Convolutional Neural Networks \and Sequential Data} 
\end{abstract}



\section{Introduction}
Multi-dimensional sequential data analysis is an important research area in Machine Learning, Financial Mathematics and many other areas. There are several methods of analyzing sequential data recently developed in deep learning, e.g.,  Recurrent Neural Network (RNN) \cite{bff0e6bd8f4a4f0d9735bf1728fb43ef}, GRU \cite{cho2014learning}, LSTM \cite{lstm} and Transformer \cite{NIPS2017_3f5ee243}. They have been successfully applied into a variety of important tasks in Data Science, such as natural language processing, financial time series and medical data analyses. Another mainstream approach to the sequential data analyses is Bayesian learning, mostly involved with Gaussian Process (GP) \cite{williams2006gaussian}, where by pre-determined a priori distribution, it has advantage in quantifying uncertainty up to some extent. For example, \cite{gp_pde} use GP to solve nonlinear partial differential equations with noisy boundary observations. 
More recently, a novel mathematical object, called {\it signature}, has been proposed and received more attention, in order to summarize information of sequential data, see \cite{boedihardjo2014signature,Chevyrev_2016,Levin2013LearningFT,lyons2002system,LyonsTerryJ2007DEDb}. In this paper we shall discuss the signature in the multi-dimensional sequential data analysis.

Signature is a graded feature set of a stream, or sequential data set, which is derived from the Rough Path Theory. Signature has been introduced as a feature map into the field of Machine Learning with successful applications to the sequential data. With  truncations up to a given desirable accuracy,  this special feature set has universality for approximations and can  characterize pathwise data efficiently. It is known that the high frequency sequential data set is transformed into several features efficiently by the truncated signature in the case of relatively low dimensional paths. For instance, 
\cite{lyons_hq} use the signatures to characterize high frequency trading paths of financial data. \cite{NIPS2019_8574} use the signature transform as a layer of neural network, and propose the deep signature transform model. Moreover, the use of the signature is model free and  data dependent, see \cite{lyons2019nonparametric,lyons2019numerical}.

However,  the application of the signatures suffers from the curse of dimensionality,  because the number of features in the truncated signature grows exponentially fast as the dimension increases. Consequently, in the case of high dimensional sequential data, this feature map requires computational costs in real data analyses. A kernel based learning algorithm has been introduced to address this problem in  \cite{JMLR:v20:16-314,toth2019bayesian}. In this paper, we propose a new algorithm to solve this problem by combining Convolutional Neural Network and the signature transform. We evaluate the reduction of the number of features and show by numerical experiments that this algorithm can gain efficiency. Thus this algorithm may contribute many applications of the signatures in the cases of the high dimensional sequential data.

The rest of this paper is organized as follows. In Section \ref{sec_sig}, we review the signature of  rough paths, geometric rough paths and nice properties, discuss Signature Classifier in the classification problems, as a typical application of the signature, and evaluate its classification error in Theorem \ref{thm_concentration}. In Section \ref{sec_cnnsig}, we introduce the main algorithm of this paper, a Convolutional Signature model, evaluate how this model reduces the number of features, show that this model preserves all information of path data and discuss its universality in Theorem \ref{thm:App-CNN-Sig}. In Section \ref{sec_exp}, a broad range of experiments are performed to support our model, including high dimensional, financial time series classification, functional estimation and textual sentimental detection. We conclude with further ongoing research in Section \ref{sec_conc}. 

\section{Signature and Geometric Rough Paths}
\label{sec_sig}
\subsection{Signatures}
Let us introduce some notations for the sequential data sets, in order to explain the signature method, following  \cite{LyonsTerryJ2007DEDb}. Given a Banach space $E$ with a norm $\|\cdot\|$, we define the tensor algebra 
	\begin{equation} \label{eq: whole}
	T((E)) := \{(a_i)_{i\geq 0}: a_i\in E^{\otimes i} \text{ for every }i \} 
	\end{equation} 
associated with the sum $+$ and with the tensor product $\otimes$ defined by 
\begin{equation*}
\begin{split}
(a_i)_{i\geq 0} + (b_i)_{i\geq 0} :=& \, (a_i+b_i)_{i\geq 0}, \quad 
(a_i)_{i\geq 0} \otimes (b_i)_{i\geq 0} := \, (c_i)_{i\geq 0}, \ 
\end{split}
\end{equation*}
where the $j$th element $c_j := \sum_{k=0}^j a_k \otimes b_{j-k}$ is the convolution  of the first $j$ elements of $ (a_{i})_{i \ge 0}$ and $(b_{i})_{i\ge 0 }$ in $T((E))$. 
%
Similarly, let us define its subset 
\begin{equation} \label{eq: finite space}
T(E) := \{(a_i)_{i\geq 0}: a_i\in E^{\otimes i} \text{ and } \exists N \in \mathbb N \text{ such that } a_i=0 \ \forall i\geq N \} \end{equation}
of $T((E))$ for those with finite number of non-zero elements. Note that $\,T(E) \subset T((E)) \,$. 
Also, we shall consider the truncated tensor algebra of order $m\in \mathbb{N}$, i.e., 
\begin{equation} \label{eq: truncatedbym}
T^m(E) := \{(a_i)_{i = 0}^m: a_i\in E^{\otimes i} \text{ for }\forall i\leq m \},
\end{equation}
which is a subalgebra of $T((E))$. 
Then as we shall see, the signatures and the $m$-th order truncated  signatures lie in these spaces $T((E))$ and $T^m(E)$, respectively. 

Now with $E :=  \R^{d}$ and the usual Euclidean norm $ \lVert \cdot \rVert$, we shall define the space $\V^p([0,T], E)$ of the $d$-dimensional continuous paths of finite $p$-th variation over the time interval $[0, T]$ and the signatures of the paths in $\V^p([0,T], E)$. 

\begin{defn}[The space of finite $p$-variation paths]
\label{bdd_var}
Fix $p\geq 1$ and the interval $\,[0, T]\,$. The $p$-variation of a $d$-dimensional path $X: [0, T]\to E:= \R^d$  is defined by
\begin{equation*}\| X \|_{p} := \left( \sup_{D_{n}\subset [0, T]} \sum_{i=0}^{n-1}\| X_{t_{i+1}} - X_{t_i} \|^p \right)^{1/p}.\end{equation*}
Here, the supremum is taken over all the possible partitions of the form $D_{n}:=\{t_i\}_{1\leq i\leq n}$ of $[0,T]$ with $0 = t_0<t_1<\cdots<t_n \le T$, $\, n \ge 1\,$.   
$X$ is said to be of finite $p$-variation, if $\| X \|_p < \infty$. We denote the set of continuous paths $X:[0,T] \to E$ of finite $p$-variation by $\V^p([0,T], E)$.
\end{defn}

We use the supremum norm $ \lVert \cdot \rVert_{\infty}$ for continuous functions on $\,[0, T]\,$, i.e., $ \lVert f \rVert_{\infty} := \sup_{x \in [0, T]} \lvert f(x) \rvert$. It can be shown that if we equip the space $\V^p([0,T], E)$ with the norm $\|X\|_{\V^p([0,T], E)} := \|X\|_p + \|X\|_{\infty}$, then $\V^p([0,T], E)$ is a Banach space. Now the signature 
and truncated signature are defined as follows.

\begin{defn}[Signatures] \label{def: sig}
The signature $\,S(X) \,$ of a path $X\in \V^p([0,T], E)$, $p \ge 1$ is defined by $S(X) := (1, X^1, X^2, ...)\in T((E))$, where the $k$-th element 
\begin{equation} \label{eq: signature}
X^k:= \int\cdots\int_{0<t_1<\cdots < t_n<T } {\mathrm d} X_{t_1}\otimes \cdots \otimes {\mathrm d} X_{t_n} \in E^{\otimes k}
\end{equation}
is the $k$-fold, iterated integral for $k \ge 1$, 
if the iterated integrals are well defined. 

The truncated signature is naturally defined as $S^m(X) := (1, X^1, X^2, ..., X^m)\in T^m(E)$ for every $m \ge 1$ including the $0$-th term $\,S^{0}(X) \, =\,  1 \,$. 
\end{defn}

\begin{rem}
	The integrals in \eqref{eq: signature} depend  on the nature of the paths. Here are some typical examples: 
	\begin{enumerate}
	\item If $X$ is of $1$-variation path, then the integrals \eqref{eq: signature} of the signature can be understood as the Stieltjes integral;
	\item If $X$ is of $p$-variation path with $1<p<2$, then it can be defined in the sense of Young (e.g., see \cite{lyons2002system}).
	\item If $X$ is a Brownian motion, then we can use the It\^{o} integral or the Stratonovtich integral. As we will explain later, when extending from a Brownian motion path or a semimartingale to a  geometric rough path, we choose the Stratonovitch integral rather than the It\^{o} integral.
	\end{enumerate}
\end{rem}

\begin{exam}[Smooth paths and piece-wise linear paths]
For $p \ge1$  the path space 
$\V^{p}([0,T], E)$ contains the smooth functions and the piece-wise linear functions. 
We give the following two examples of paths in $\V^p([0,T], E)$, as shown in Figure \ref{path_eg}. In its left panel, 
we plot the smooth path $X_t = (t, (t-2)^3), t\in[0,4]$. In its right panel, 
we represent the discrete data: daily AAPL adjusted close stock price from Nov 28, 2016 to Nov 24, 2017 by interpolating the path linearly between each successive two days. 
The first $\,2\,$ degree signatures $\, X^{1}\,$ and $\, X^{2}\,$ of these two paths in \eqref{eq: signature} are calculated and given in Table \ref{tab:example_sig}.

\end{exam}

\begin{table}[H]
\centering
\begin{tabular}{l|cc}
\hline
  $X$ & $(t, (t-2)^3)$ & AAPL \\ \hline
 $X^1$ &    $(4, 16)$    & $(1, 65.52)$ \\
$X^2$ & $\left(\begin{array}{cc} 8 & 32\\
				32 & 128\end{array}\right)$  &  $\left(\begin{array}{cc}0.5 & 31.17\\
				34.00 & 2123.3\end{array}\right)$ \\ \hline 
\end{tabular}
\caption{The corresponding signatures for the smooth path $(t, (t-2)^3)$ and the piece-wise linear path of the augmented AAPL adjusted price in Figure \ref{path_eg}, respectively.}
\label{tab:example_sig}
\end{table}

\begin{figure}[h]
\centering
	\subfloat[Smooth path]{
	\includegraphics[scale=0.35]{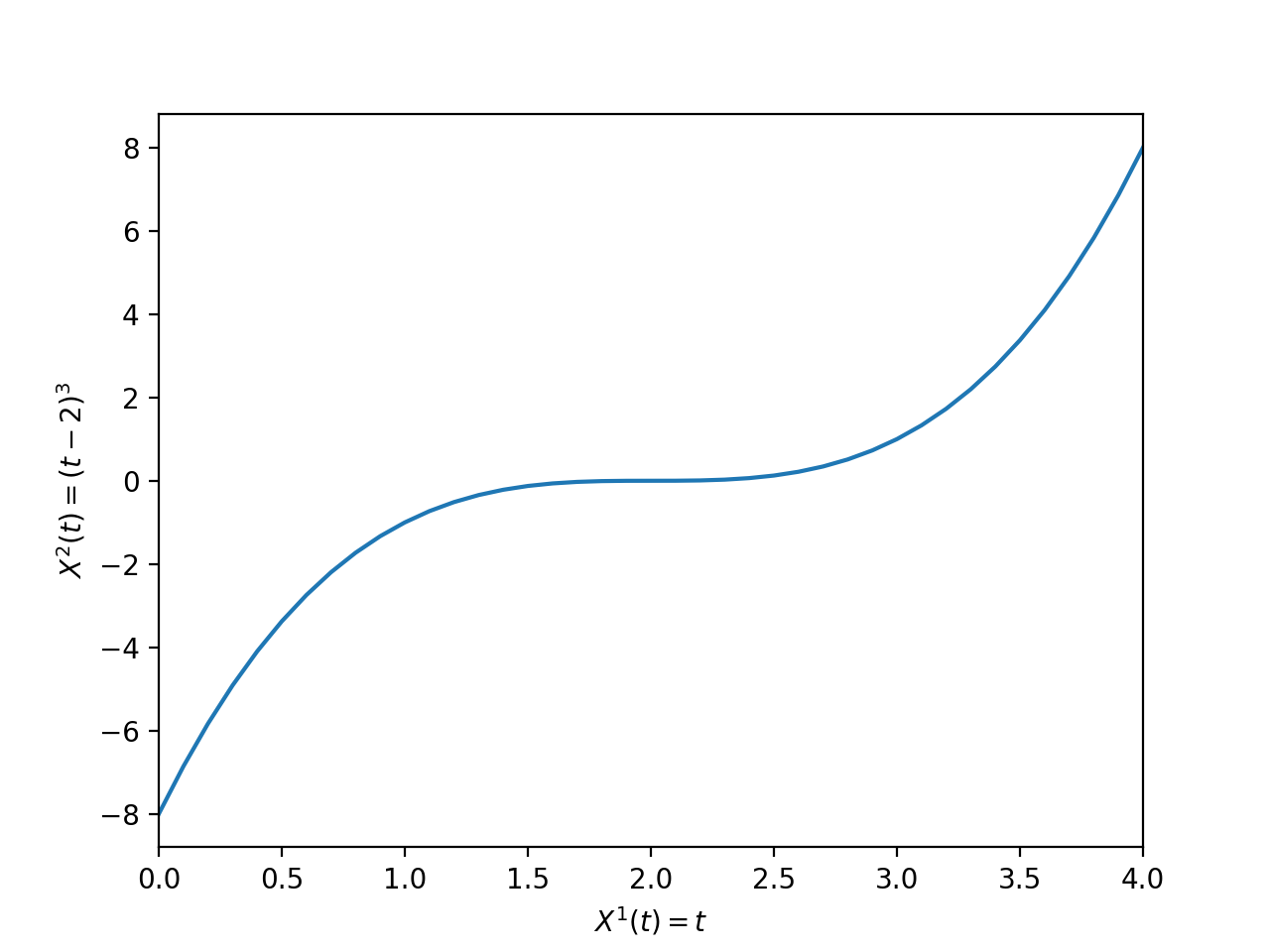}
	} 
	\subfloat[Piecewise linear path]{
	\includegraphics[scale=0.35]{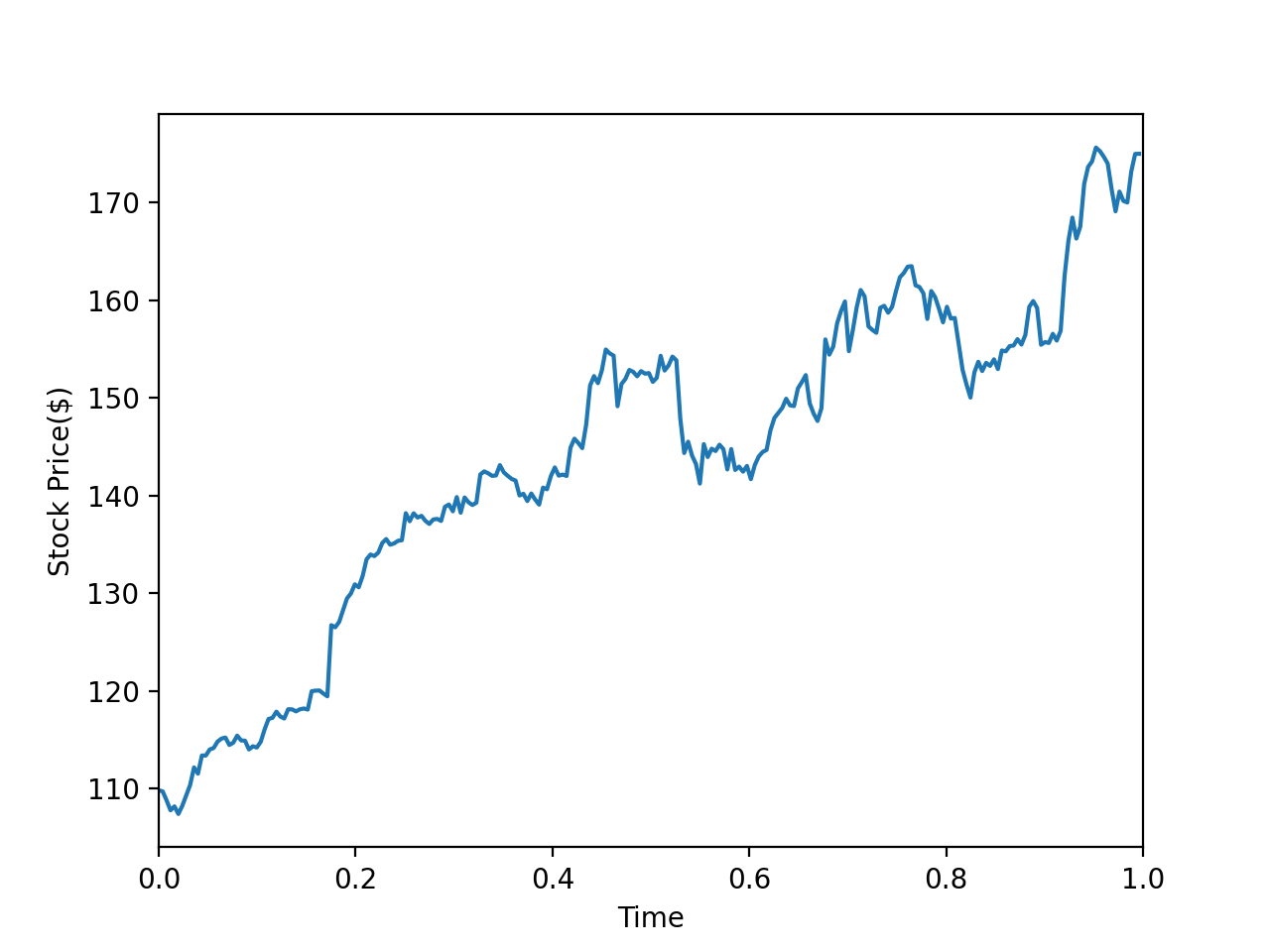}
	} 
	\caption{Examples of $\V^{p}([0,T], E)$, $p \ge1$: (a) Plot of a smooth path $X_t = (t, (t-2)^3), t\in[0,4]$. (b) Plot of linear interpolation of daily AAPL adjusted close stock price from Nov 28, 2016 to Nov 24, 2017.} 
	\label{path_eg}
\end{figure}


\subsection{Geometric Rough Paths and Linear Functionals} \label{sec: 2.2}

Here we introduce rough paths and geometric rough paths briefly. More details can be found in \cite{lyons2002system} and \cite{LyonsTerryJ2007DEDb}. Instead of $T((E))$ in \eqref{eq: whole}, the $p$-rough paths and the geometric $p$-rough paths are objects in $T^{\lfloor p \rfloor}(E)$ in \eqref{eq: truncatedbym} for some real number $p \, (\geq 1)$. A fundamental result from rough paths theory and signatures \cite{LyonsTerryJ2007DEDb} is that there exists a continuous unique lift from $T^{\lfloor p \rfloor}(E)$ to $T((E))$. This lift is made in an iterated integral,  and consequently, it gives us the signature of rough paths. 

We denote the space of the $p$-rough paths by $\Omega_p$. The space $G\Omega_p$ of the geometric $p$-rough paths is defined by the $p$-variational closure (\textit{cf.} \cite{LyonsTerryJ2007DEDb} Chapter 3.2) of $S^{\lfloor p\rfloor}(\Omega_1)$. For a path $X:[0,T]\to \R^d$ with the bounded $p$-variation, the truncated signature belongs to the space of the $p$-rough paths, i.e., $S^{\lfloor p\rfloor}(X)\in \Omega_p$. If $X$ is of bounded $1$-variation, then the truncated signature belongs to the space of the geometric $p$-rough paths, i.e., $S^{\lfloor p\rfloor}(X)\in G\Omega_p$ for any $p(\geq 1)$.

It is manifested that the signature enjoys many nice properties. For example, signature characterizes paths up to tree-like equivalence \cite{boedihardjo2014signature} that are parametrization invariant. Here is a precise statement. 
\begin{prop}[Parametrization Invariance, Lemma 2.12 of \cite{Levin2013LearningFT}] 
Let $X:[0,T]\to \R^d$ be a path with bounded variation and $\psi: [0,T]\to[0,T]$ a re-parametrization of the time parameter. If we define $\tilde{X}$ by $\tilde{X}_t := X_{\psi(t)}$, then each term in $S(\tilde{X})$ is equal to the corresponding term in $S(X)$, i.e. $S(\tilde{X}) = S(X)$.
\end{prop}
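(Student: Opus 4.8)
The plan is to reduce the claim to a change-of-variables identity for Riemann–Stieltjes integrals and then run an induction on the signature level $k$. First I would make explicit the convention implicit in the statement: a re-parametrization $\psi\colon[0,T]\to[0,T]$ is continuous, non-decreasing and surjective, so $\psi(0)=0$ and $\psi(T)=T$. Since $X$ is continuous of bounded variation and $\psi$ is continuous and monotone, the composition $\tilde X=X\circ\psi$ is again a continuous path of bounded variation (its total variation is at most that of $X$), so all the iterated integrals in \eqref{eq: signature} defining $S(X)$ and $S(\tilde X)$ are well-defined Riemann–Stieltjes integrals.

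For $t\in[0,T]$ and $k\ge1$ introduce the ``running'' iterated integrals
\[
X^k_{0,t}:=\int_{0<t_1<\cdots<t_k<t}{\mathrm d}X_{t_1}\otimes\cdots\otimes{\mathrm d}X_{t_k},\qquad X^0_{0,t}:=1,
\]
and define $\tilde X^k_{0,s}$ analogously from $\tilde X$. These satisfy the recursions $X^k_{0,t}=\int_0^t X^{k-1}_{0,r}\otimes{\mathrm d}X_r$ and $\tilde X^k_{0,s}=\int_0^s \tilde X^{k-1}_{0,r}\otimes{\mathrm d}\tilde X_r$, and each map $r\mapsto X^{k-1}_{0,r}$ is continuous. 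I would then prove by induction on $k$ that $\tilde X^k_{0,s}=X^k_{0,\psi(s)}$ for all $s\in[0,T]$. The case $k=0$ is trivial. For the inductive step, insert the inductive hypothesis into the recursion for $\tilde X^k_{0,s}$ to obtain $\tilde X^k_{0,s}=\int_0^s X^{k-1}_{0,\psi(r)}\otimes{\mathrm d}\bigl(X\circ\psi\bigr)_r$, and then invoke the change-of-variables formula for Riemann–Stieltjes integrals: for a continuous integrand $f$, a bounded-variation integrator $g$, and a continuous non-decreasing $\psi$ one has $\int_0^s f(\psi(r))\,{\mathrm d}(g\circ\psi)_r=\int_0^{\psi(s)}f(t)\,{\mathrm d}g_t$. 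Taking $f(\cdot)=X^{k-1}_{0,\cdot}$ and $g=X$ gives $\tilde X^k_{0,s}=X^k_{0,\psi(s)}$, closing the induction. Evaluating at $s=T$ and using $\psi(T)=T$ yields $\tilde X^k=X^k$ for every $k\ge1$, hence $S(\tilde X)=S(X)$.

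The only genuinely delicate point is the change-of-variables step, since $\psi$ need not be $C^1$ and may be constant on sub-intervals; on such a flat piece $X\circ\psi$ is constant, so the left-hand integral picks up no mass there, matching the fact that $\psi$ contributes no new values of $t$. If one prefers to avoid quoting a change-of-variables theorem, the identity follows directly at the level of Riemann–Stieltjes sums: a partition $0=u_0<\cdots<u_n=T$ of $[0,T]$ is pushed forward by $\psi$ to the weakly ordered points $\psi(u_0)\le\cdots\le\psi(u_n)$, the nested tagged sums approximating $\tilde X^k$ coincide termwise with those approximating $X^k$ over the image partition (repeated points contribute zero increments and may be discarded), and letting the mesh tend to zero—using uniform continuity of $\psi$ to control the mesh of the image partition—gives the claim. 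Either route is routine; I would present the induction with the change-of-variables lemma as the main line and note the Riemann-sum argument as the fallback.
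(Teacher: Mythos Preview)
Your argument is correct and is the standard proof of this fact: induct on the signature level using the recursion $X^k_{0,t}=\int_0^t X^{k-1}_{0,r}\otimes{\mathrm d}X_r$ together with the change-of-variables identity $\int_0^s f(\psi(r))\,{\mathrm d}(g\circ\psi)_r=\int_{\psi(0)}^{\psi(s)}f(t)\,{\mathrm d}g_t$ for Riemann--Stieltjes integrals with continuous integrand, bounded-variation integrator, and continuous non-decreasing $\psi$. Your treatment of flat pieces of $\psi$ and the alternative Riemann-sum justification are both appropriate.

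Note, however, that the paper does not supply its own proof of this proposition: it is stated with attribution to Lemma~2.12 of \cite{Levin2013LearningFT} and left unproved in the text. So there is no in-paper argument to compare against; your write-up simply fills in what the paper cites from the literature, and the approach you take is essentially the same as the one given in that reference.
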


Moreover, if there exists a monotone increasing dimension in the path with bounded variation or geometric rough path, we can get rid of tree-like equivalence \cite{boedihardjo2014signature,gyurk2013extracting,Levin2013LearningFT}. 
Also, it is easy to specify one path among the parametrization invariance by adding timestamps. 
In other words, provided that an extra time dimension included, signature characterize geometric rough path uniquely. Another useful fact from rough path theory \cite{Chevyrev_2016,lyons2002system} 
is that signature terms enjoy a factorial decay as the depth increases, which makes truncating signature reasonable. 
The following remark shows an example of the factorial decay for bounded $1$-variation paths.

\begin{rem}[Factorial Decay, Proposition 2.2 of \cite{LyonsTerryJ2007DEDb}]
\label{exam_decay}
Let $X: [0,T] \to \R^d$ be a continuous path with bounded $1$-variation, then  for every $\, k \ge 1\,$ 
	\begin{equation}
	\left\| \ \  \idotsint\limits_{0\leq t_1 < \cdots < t_k\leq T} d X_{t_1}\otimes \cdots \otimes d X_{t_k}\right\| \leq \frac{\|X\|_{1}^k}{k!},
	\end{equation}
	where $\|\cdot\|$ is the tensor norm. 
\end{rem}

All these properties motivate us 
to use the signature as a feature map in Data Science. We shall then define the linear forms on the signatures.

For simplicity, let us fix $E = \R^d$, and let $\{e_i\}_{i=1}^d$ ($\{e_i^*\}_{i=1}^d$, respectively) be a basis of $\R^d$ 
(a basis of the dual space $(\R^d)^*$ of $\R^d$, respectively). 
For every $n \in \N$ and indexes $(i_{1}, \ldots , i_{n}) \in \{1, \ldots , d\}^{n}$, $(e_{i_1}^*\otimes \cdots \otimes e_{i_n}^*)$ can be naturally extended to $(E^*)^{\otimes n}$ with the basis $(e_I^*=e_{i_1}^*\otimes \cdots \otimes e_{i_n}^*)$, 
and we call $I=i_1\cdots i_n$ a {\it word} of length  $n$. The linear actions of $(E^*)^{\otimes n}$ on $E^{\otimes n}$ extends naturally a linear mapping $(E^*)^{\otimes n} \to T((E))^*$ by
\begin{equation}e_I^*(\mathbf{a}) := e^*_I(a_n),\end{equation}
for every word $I$ and every element $\mathbf{a}=(a_0, a_1, \dots, a_n, \dots)\in T((E))$.

Let $A^*$ be the collection of all words of length $n$ for all $n\in \N$. Then $\{e_I^*\}_{I\in A^*}$ forms a basis of $T(E^{\ast}) = T((\R^d)^*)$. Let $I,J\in A^*$ be two words of lengths $m$ and $n$  with $I=i_1\cdots i_m$ and $J=j_1\cdots j_n$, respectively.
We say a permutation $\sigma$ in the symmetric group $\mathfrak{G}_{m+n}$ of $\{1, \ldots , m+n\}$ is a \textit{shuffle} of $\{1,\dots, m\}$ and $\{m+1,\dots, m+n\}$, if $\sigma(1)<\dots< \sigma(m)$ and $\sigma(m+1) <\dots< \sigma(m+n)$. We denote the collection of all \textit{shuffles} of $\{1, \ldots , m\}$ and $\{1, \ldots , n\}$ by $\textit{Shuffles}(m,n)$.

\begin{defn}[Shuffle Product]
For every pair $I = i_1\cdots i_m$, $J= j_1\cdots j_n$ of words of length $m$ and $n$, the shuffle product $e_I^* \shuffle e_J^*$ of $e_I^*$ and $e_J^*$ is given by 
\begin{equation}
\label{shuffle_prod}
e_I^* \shuffle e_J^* := \sum_{\sigma \in \text{\rm Shuffles}(m,n)} e^*_{(k_{\sigma^{-1}(1)} \cdots k_{\sigma^{-1}(m+n)} )},
\end{equation}
where 
 $k_1\cdots k_{m+n} = i_1\cdots i_m j_1\cdots j_n$.
\end{defn}

Denote $T((\R^d))^*$ as the space of linear forms on $T((\R^d))$ induced by $T((\R^d)^*)$. The shuffle product between $f,g \in T((\R^d))^*$ denoted by $f \shuffle g$ can be defined via natural extension of \eqref{shuffle_prod}, by the 
bi-linearity of $\shuffle$. It can be shown that $T((\R^d))^*$ is an algebra equipped with shuffle product and element-wise addition restricted to the geometric rough path space $S(\V^p([0,T], \R^d))$, see Theorem 2.15 of \cite{LyonsTerryJ2007DEDb}. The following proposition motivates us to use the signature as a feature map.

\begin{prop}[Universal Approximation]
Fix $p\geq 1$, a continuous function $f: \V^p([0,T], \R^d)\to \R$ of finite $p$-variation, and  a compact subset $K$ of $\V^p([0,T], \R^d)$. If $S(x)$ is a $p$-geometric rough path for each $x\in K$, then for every $ \epsilon>0$, there exists a linear form $l^\epsilon \in T((\R^d))^*$, such that
\begin{equation}\sup_{x\in K}| f(x) - \langle l^\epsilon, S(x) \rangle | < \epsilon.
\end{equation}
\label{prop_LinearEstimation}
\end{prop}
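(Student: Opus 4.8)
The plan is to exhibit $\{\langle l, S(\cdot)\rangle : l\}$ as a point-separating unital subalgebra of $C(K)$ and then invoke the Stone--Weierstrass theorem. Concretely, set
\[
\mathcal{A} := \{\, x \mapsto \langle l, S(x)\rangle \ :\ l \in T((\R^d))^*,\ l \text{ a finite linear combination of the } e_I^* \,\}.
\]
Since $K$ is a compact subset of the Banach space $\V^p([0,T],\R^d)$ it is compact metric, and $f$ is continuous on it, so it suffices to show that $\mathcal{A}$ is dense in $C(K)$ in $\|\cdot\|_\infty$: then for a prescribed $\epsilon>0$ one picks $g = \langle l^\epsilon, S(\cdot)\rangle \in \mathcal{A}$ with $\|f-g\|_\infty < \epsilon$, and this $l^\epsilon$ is the desired linear form.

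To apply Stone--Weierstrass I would verify three points. First, $\mathcal{A} \subseteq C(K)$: a finitely supported $l$ touches only finitely many tensor levels, so $x\mapsto \langle l, S(x)\rangle$ is a finite linear combination of the coordinate maps $x\mapsto e_I^*(X^{|I|})$; these are continuous on $K$ because the signature map is continuous there — this follows from the continuity of the lift $T^{\lfloor p\rfloor}(\R^d)\to T((\R^d))$ recalled in Section~\ref{sec: 2.2} together with continuity of the truncated signature in the $p$-variation topology — and $e_I^*$ is bounded on each level. Second, $\mathcal{A}$ is a unital subalgebra: closure under linear combinations is immediate, the empty word gives $\langle e_\emptyset^*, S(x)\rangle \equiv 1$, and closure under products is exactly the shuffle identity $\langle l_1, S(x)\rangle\,\langle l_2, S(x)\rangle = \langle l_1 \shuffle l_2, S(x)\rangle$, which holds because $S(x)$ lies in the geometric rough path space (Theorem~2.15 of \cite{LyonsTerryJ2007DEDb}), with $l_1\shuffle l_2$ again a finite combination of the $e_I^*$. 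Both of these steps are soft.

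The crux is the third hypothesis, that $\mathcal{A}$ separates the points of $K$, which is equivalent to injectivity of $x\mapsto S(x)$ on $K$: if $S(x)\neq S(y)$ then some coordinate differs, $e_I^*(S(x))\neq e_I^*(S(y))$, and $l=e_I^*$ separates $x$ from $y$. Here one must import the uniqueness theory for signatures of (geometric) rough paths: the signature determines the path up to tree-like equivalence \cite{boedihardjo2014signature}, so on the natural state space — for instance after adjoining a strictly increasing time coordinate, which makes every path tree-reduced, as noted just before the statement — distinct elements of $K$ have distinct signatures and the separation hypothesis holds. (On a space where two distinct paths can share a signature the assertion is vacuous for such a pair, since any $f$ well defined there is necessarily constant on tree-like classes.) I expect this point-separation step to be the only genuine obstacle; granting it, the real Stone--Weierstrass theorem yields $\overline{\mathcal{A}} = C(K)$, and choosing $g\in\mathcal{A}$ within $\epsilon$ of $f$ and reading off $l^\epsilon$ finishes the proof.
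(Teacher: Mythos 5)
Your proposal is correct and follows essentially the same route as the paper, which cites exactly the combination of signature uniqueness (point separation, up to tree-like equivalence) and the Stone--Weierstrass theorem applied to the shuffle algebra of linear functionals on signatures; you have simply written out in detail the verification that this family is a unital subalgebra (via the shuffle identity) and separating. Your parenthetical handling of the tree-like-equivalence caveat is a reasonable way to make precise the uniqueness step the paper leaves implicit.
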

\begin{proof}
The proof follows directly from the uniqueness of signature transform for geometric rough paths and the Stone-Weierstrass theorem. See \cite{lyons2019nonparametric} and Theorem 4.2 in \cite{arribas2018derivatives} for more details.
\end{proof}

\begin{rem}[A curse of dimensionality] \label{rem: dimensions} By Definition \ref{def: sig}, the truncated signature $S^m(X)$ 
has a total of ${\bf d}_{m}\, :=\, \sum_{k=0}^m d^k = {(d^{m+1} - 1)}/{(d-1)}$ many terms for $m \ge 0$. The signature transform is an efficient feature reduction technique, when we have the $d$ dimensional path sampled with high frequency in time. However, when the dimension $d$ is large, the number of signature terms to be computed increases exponentially fast and makes the signature not easily applicable in practice. 
\end{rem} 

To our best knowledge at this time, only \cite{JMLR:v20:16-314} and \cite{CLSY20} introduce new algorithms of calculating the kernel of the signatures and \cite{toth2019bayesian} discuss the application of the kernel methods to fix this high dimensional problem. We introduce Convolutional Neural Network (CNN) to solve this problem in Section \ref{sec_cnnsig}.

\subsection{Classification via Signature}

Before we discuss the convolutional neural network in Section \ref{sec_cnnsig}, we consider the application of the signatures to  classification problems. In classification problems, we estimate the probability of an object belonging to each class. This estimation problem  for the sequential data classification can be solved via the signature. 

On a probability space $(\Omega, \mathcal{F}, \p )$ consider $k$ classes, {\it class $1$}, {\it class $2$}, $\dots$, {\it class $k$}, and $n$ paired independent data $(x^i, y^i)_{1\leq i\leq n}$, where each $x^i:[0,T]\to\R^d$ is the path data and the corresponding label $y^i\in\{1, \dots, k\}$ is the class which $x^{i}$ belongs to. We assume that the labels $y^{1}, \ldots , y^{n}$ are sampled from a common distribution and the conditional probability $\mathbb P ( x^{i} \in \cdot \, \vert \, y^{i}) $ of $x^{i}$, given the class $y^{i}$, is a common probability distribution for $i =1, 2, \ldots , n $. Since we often observe the path dataset at discrete time stamps and we use piece-wise linear interpolations to connect among them, it is reasonable to assume that each path $x$ in the dataset is of bounded $1$-variation. Hence, its signature $S(x^i)$ is a geometric $1$ rough path in subsection \ref{sec: 2.2}. 

\begin{defn}[Classification problem] Our sequential classification problem is stated as follows: given training data $(x^i, y^i)_{1\leq i\leq n}$, derive a classifier $g$ for predicting the labels for unseen data $(x, y)$. 
Let $p_j(x):=\p( y=j |x)$ for $j =  1, \dots, k$. Our goal is to estimate these conditional probability $p_j(x)$ by $\hat{p}_{j}(x) $ for the path $x$ of bounded $1$-variation and classify $x$ in the class $\text{\rm arg}\max_{j} \hat{p}_{j}(x)$  for $j = 1, \ldots, k $ as accurate as possible. 
\end{defn}

Since the signature $S(x)$ of $x$ determines the path $x$ uniquely, it is reasonable to consider the signature $S(x)$ and a nonlinear continuous function $g: T((\R^d))\to [0,1]^k$, such that
\begin{equation}
\label{classify_g}
g(S(x)) = \left( \hat{p}_1(x), \dots, \hat{p}_k(x) \right)^{\mathrm T},
\end{equation}
where $\hat{p}_j$'s are estimator of $p_j$'s, subject to $\sum_{j=1}^k\hat{p}_j(x) = 1$. Here, $\mathrm T$ represents the transpose of the vector. 

For practical use, we use the truncate signature transforms, thanks to the factorial decay property (Remark \ref{exam_decay}) of the signature. 
With the truncation depth $m$, we obtain  the estimate  
\begin{equation}
\label{func_g}
g(S^m(x)) = \left( \hat{p}_1(x), \dots, \hat{p}_k(x) \right)^{\mathrm T},
\end{equation}
where $g: T^m(\R^d)\to [0,1]^k$ is a nonlinear continuous function, 
and then the predicted label is given by 
\begin{equation}
\label{y_hat}
\hat{y} =\argmax_j \hat{p}_j(x).
\end{equation} 

\begin{defn}[Signature Classifier]
We call $h:T((\R^d))\to [0,1]$ of the form \eqref{classify_g} a signature classifier, where $T((\R^d))$ is the tensor algebra and $h$ is a nonlinear continuous function. Naturally, a truncated signature classifier of degree $m\in \N$ is $h:T^m(\R^d)\to [0,1]$ of the form \eqref{func_g}.
\end{defn}

{
In the simple case with only $2$ classes, {\it class $0$} and {\it class $1$}, we consider the following concentration inequalities for classification via signature. We first restate the classification problem for the two classes. Suppose we have the pairwise, independent, identically distributed samples $(X^1, Y^1), \dots, (X^n, Y^n)$ where $Y^i\in\{0,1\}$ and $X^i \in \mathcal{V}^1([0,T], \R^d)$. Let $h: \mathcal{V}^1([0,T], \R^d) \to \{0, 1\}$ be a classifier. The training error $\hat{R}_n(h) $ and the true error $R(h)$ are defined by 
\begin{equation} \label{eq: hatRnh}
\hat{R}_n(h) = \frac{1}{n} \sum_{i=1}^n I(Y^i \neq h(X^i))
\, , \quad \text{ and } \quad R(h) 
= \p(Y\ne h(X)) . 
\end{equation}
Here, $\,I (\cdot) \,$ is the indicator function. Correspondingly, $R(h) = \p(Y\ne I(h(X)>0.5))$ and $\hat{R}_n(h) = \frac{1}{n}\sum_{i=1}^n I(Y^i \neq I(h(X^i)>0.5))$.  
We shall see that $\hat{R}_n(\hat{h}):= \inf_{h \in \mathcal{H}} \hat{R}_n(h)$ is close to $R(h_*):= \inf_{h\in \mathcal{H}}R(h)$, where $\mathcal{H}$ is the collection of the signature classifiers and we assume that $h_* \in \mathcal{H}$. Denote the set 
$$\mathcal{E} := \{\sup_{h\in\mathcal{H}} |\hat{R}_n(h) - R(h)| \leq \epsilon\}$$
 to be the event that the training error $\hat{R}_{n}(h)$ is close to the true error $R(h)$ for all classifiers $h\in \mathcal{H}$ in the range of $\varepsilon$, given a fixed $\varepsilon > 0 $. 


From now on, we assume $\mathcal{H}$ is a compact set of truncated signature classifiers of degee $m$ equipped with metric $\rho$. The following definition comes from \cite{hd_probability}. 

\begin{defn}[$\delta$-net and covering number] 
\label{def:deltanet} 
A set $H$ is called a $\delta$-net for $(\mathcal{H}, \rho)$ if for every $h\in\mathcal{H}$, there exists $\pi(h)\in H$ such that $\rho(h, \pi(h))<\delta$. The smallest cardinality of a $\delta$-net for $(\mathcal{H}, \rho)$ is called the covering number

\begin{equation}
N(\mathcal{H}, \rho, \delta):= \inf\{|H|: H \text{ is a } \delta\text{-net for }(\mathcal{H}, \rho)\}.
\end{equation}
\end{defn}

In our case, we may take the uniform norm $\rho$, for example. Indeed, by the Ascoli-Arzel\`{a} theorem, we only need $\mathcal{H}$ to be equicontinuous to make it compact, and hence $N_\delta:=N(\mathcal{H}, \rho, \delta)$ is always finite for any $\delta>0$. Let $H_{\delta}$ be a $\delta$-net of $\mathcal{H}$ with cardinality $N_\delta$.

\begin{thm}
\label{thm_concentration}
For every $\epsilon>0$, $\epsilon_0>0$, there exist $\delta>0$ and a corresponding finite covering number $N_\delta$, such that
\begin{equation}
\label{ineq_concentration}
\p(\sup_{h\in \mathcal{H}} |\hat{R}_n(h) - R(h)|>\epsilon)\le 2 N_\delta\,  e^{-2n\epsilon} + \epsilon_0.
\end{equation}
\end{thm}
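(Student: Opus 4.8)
The plan is to combine a standard $\delta$-net reduction with Hoeffding's inequality and a uniform continuity argument. The key observation is that the map $h \mapsto R(h)$ and (for each fixed sample) the map $h \mapsto \hat R_n(h)$ are continuous on the compact metric space $(\mathcal H, \rho)$; this is what lets us pass from a supremum over the infinite class $\mathcal H$ to a maximum over a finite net. More precisely, given the target accuracy $\epsilon > 0$ and the slack $\epsilon_0 > 0$, first I would fix $\delta > 0$ small enough that $\rho(h, h') < \delta$ implies both $|R(h) - R(h')| < \epsilon/2$ and, with probability at least $1 - \epsilon_0$ uniformly over the sample, $|\hat R_n(h) - \hat R_n(h')| < \epsilon/2$. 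The second of these is the delicate point and I return to it below. Let $H_\delta = \{h_1, \dots, h_{N_\delta}\}$ be a $\delta$-net of cardinality $N_\delta = N(\mathcal H, \rho, \delta)$, and let $\pi(h) \in H_\delta$ denote a nearest net point to $h$.

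Next I would write the usual chaining-free bound
\begin{equation*}
\sup_{h \in \mathcal H} |\hat R_n(h) - R(h)| \le \sup_{h \in \mathcal H} |\hat R_n(h) - \hat R_n(\pi(h))| + \max_{1 \le j \le N_\delta} |\hat R_n(h_j) - R(h_j)| + \sup_{h \in \mathcal H} |R(\pi(h)) - R(h)|.
\end{equation*}
The third term is at most $\epsilon/2$ by the choice of $\delta$. The first term is at most $\epsilon/2$ except on an event of probability at most $\epsilon_0$, again by the choice of $\delta$. So on the complementary event, $\sup_{h} |\hat R_n(h) - R(h)| > \epsilon$ forces $\max_j |\hat R_n(h_j) - R(h_j)| > \epsilon/2$. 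For each fixed $h_j$, the random variables $I(Y^i \ne h_j(X^i))$, $i = 1, \dots, n$, are i.i.d., bounded in $[0,1]$, with mean $R(h_j)$, so Hoeffding's inequality gives $\p(|\hat R_n(h_j) - R(h_j)| > \epsilon/2) \le 2 e^{-2n(\epsilon/2)^2} = 2 e^{-n\epsilon^2/2}$. A union bound over the $N_\delta$ net points and adding back the $\epsilon_0$ from the continuity event yields a bound of the form $2 N_\delta e^{-n\epsilon^2/2} + \epsilon_0$; absorbing constants (or reading the statement's exponent loosely) gives \eqref{ineq_concentration}.

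The main obstacle is controlling the first term, i.e. the oscillation of the empirical risk $\hat R_n$ over balls of radius $\delta$. Unlike $R(h)$, the functional $\hat R_n(h)$ involves the indicator $I(h(X^i) > 0.5)$, which is not a continuous function of $h$: two classifiers $h, h'$ with $\rho(h,h') < \delta$ can disagree on the class label at a point $X^i$ whenever one of them takes a value within $\delta$ of the threshold $0.5$. To handle this I would argue that, since $\mathcal H$ is compact and equicontinuous and the underlying score functions are genuinely nonlinear continuous maps, the probability that $h(X)$ lands in the boundary layer $(0.5 - \delta, 0.5 + \delta)$ for some $h \in \mathcal H$ tends to $0$ as $\delta \downarrow 0$; hence for $\delta$ small the expected number of "flipped" samples is small, and by Markov's inequality the fraction of flipped samples exceeds $\epsilon/2$ with probability at most $\epsilon_0$. (Alternatively one works directly with the $[0,1]$-valued scores $\hat p_j$ before thresholding, where $\hat R_n$ genuinely is $\rho$-Lipschitz, and only thresholds at the end; this is cleaner but requires reinterpreting $R(h)$ and $\hat R_n(h)$ accordingly.) Once this quantitative equicontinuity-in-probability of $\hat R_n$ is in hand, the choice of $\delta$ — and therefore of $N_\delta$ — is determined by $\epsilon$ and $\epsilon_0$, and the rest of the argument is the routine net-plus-Hoeffding computation above.
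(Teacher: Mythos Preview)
Your approach is correct in spirit but differs from the paper's, and the difference is worth noting. You use the standard three-term decomposition $\sup_{\mathcal H}\le \text{(empirical oscillation)}+\max_{H_\delta}+\text{(population oscillation)}$, which forces you to apply Hoeffding at level $\epsilon/2$ and to confront head-on the discontinuity of $h\mapsto I(h(X)>0.5)$; your ``boundary layer'' argument then needs an extra regularity assumption (no mass at the threshold) that is not stated in the theorem. The paper instead bounds $\p\bigl(\sup_{h\in H_\delta}|\hat R_n(h)-R(h)|>\epsilon\bigr)\le 2N_\delta e^{-2n\epsilon}$ directly over the finite net at level $\epsilon$ (via the Markov--MGF form of Hoeffding and a union bound), and then simply asserts
\[
\p\Bigl(\sup_{h\in\mathcal H}|\hat R_n(h)-R(h)|>\epsilon\Bigr)=\lim_{\delta\to 0}\p\Bigl(\sup_{h\in H_\delta}|\hat R_n(h)-R(h)|>\epsilon\Bigr),
\]
so that for any $\epsilon_0>0$ some $\delta>0$ gives the stated bound. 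This is shorter and recovers the exact exponent $e^{-2n\epsilon}$ in the statement (your route yields $e^{-n\epsilon^2/2}$, as you note), but the price is that the limit identity above hides exactly the continuity issue you worried about: it is not automatic for a discontinuous $h\mapsto\hat R_n(h)$. In effect the two proofs trade the same difficulty back and forth --- you make it explicit and pay with a worse constant and an extra assumption, the paper sweeps it into an unargued limit step and keeps the clean exponent.
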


\begin{proof}
Take a $\delta$-net $H_{\delta}$ of $\mathcal H$ with cardinality $N_{\delta}$. By the Markov inequality and the definition of the covering number, we have
\begin{align*}
	\p(\sup_{h\in H_{\delta}} (\hat{R}_n(h) - R(h))>\epsilon) & \le e^{-t\epsilon} \E[\sup_{h\in H_{\delta}} e^{t (\hat{R}_n(h) - R(h))}] \\
	&\le N_{\delta} e^{-t\epsilon} \sup_{h\in H_{\delta}}\E[e^{t (\hat{R}_n(h) - R(h))}].
\end{align*}
Since $\hat{R}_{n}(h)$ is the sum \eqref{eq: hatRnh} of independent random variables, by Hoeffding's inequality \cite{hoeffding}, we have
$e^{-t\epsilon} \E[e^{t (\hat{R}_n(h) - R(h))}] \le e^{-2n\epsilon}$ for $h\in H_{\delta}$, $t \ge 0$ and $n\ge1$. 
Hence, for every $n \ge 1$ and $\delta$-net $H_{\delta}$ of $\mathcal H$, we have 
\begin{align*}
	\p(\sup_{h\in H_{\delta}} (\hat{R}_n(h) - R(h))>\epsilon) &\le N_{\delta} e^{-t\epsilon} \sup_{h\in H_{\delta}}\E[e^{t (\hat{R}_n(h) - R(h))}] \le N_\delta e^{-2n\epsilon}.
\end{align*}
By a similar argument, we also have 
$\p(\sup_{h\in H_{\delta}} ( R(h) -\hat{R}_n(h) )>\epsilon) \le N_\delta e^{-2n\epsilon}$ for every $n \ge 1$ and $\delta$-net $H_{\delta}$ of $\mathcal H$.  

Combining the above two inequalities, we obtain that for every $n \ge 1$ and $\delta$-net $H_{\delta}$ of $\mathcal H$ 
\begin{equation*}\p(\sup_{h\in H_{\delta}} |\hat{R}_n(h) - R(h)|>\epsilon) \le 2 N_\delta e^{-2n\epsilon}.\end{equation*}
By approximating the supremum over $\mathcal H$ by the supremum over the sets $H_{\delta}$ with cardinality $N_{\delta}$, that is, 
  \begin{equation*}\p(\sup_{h\in \mathcal{H}} |\hat{R}_n(h) - R(h)|>\epsilon) = \lim_{\delta\to 0}\p(\sup_{h\in H_{\delta}} |\hat{R}_n(h) - R(h)|>\epsilon),
\end{equation*}
we conclude \eqref{ineq_concentration} that for any $\epsilon_0>0$, there exits a $\delta>0$, 
\begin{align*}
\p(\sup_{h\in \mathcal{H}} |\hat{R}_n(h) - R(h)|>\epsilon) &< \p(\sup_{h\in H_{\delta}} |\hat{R}_n(h) - R(h)|>\epsilon)+\epsilon_0 \\
&\le 2 N_\delta e^{-2n\epsilon} + \epsilon_0.
\end{align*}
\end{proof}
By Theorem \ref{thm_concentration}, the event $\mathcal{E}$ holds with high probability provided that $n$ is sufficiently large. On the set $\mathcal{E}$, we have by definitions 
\begin{equation} \label{eq: 2.15}
R(h_*) \le R(\hat{h}) \le \hat{R}_n(\hat{h}) + \epsilon \le \hat{R}_n(h_*) + \epsilon \le R(h_*) + 2\epsilon . 
\end{equation}
Thus, it follows that $|R(\hat{h}) - R(h_*)|\le 2\epsilon$ on the set $\mathcal{E}$. Thus, 
on $\mathcal{E}$, the best empirical signature classifier $\hat{h}$ is close to the best true signature classifier $h_*$ as in \eqref{eq: 2.15}. The connection between signature classifier and general classifier can be constructed by the uniqueness of the  signature transform.

This covering number $N_\delta$ in Definition \ref{def:deltanet} plays an essential role here. The study of the covering number $N(\mathcal H, \rho, \delta)$ for the compact set $\mathcal H$ of the truncated signature classifiers is still in progress. If we can quantify this number, then the number of training samples $n$ needed for fixed error can be calculated from \eqref{ineq_concentration}.
}

\begin{exam}[GARCH time series]
\label{garch_exam}
We give an example of two classes of time series, $\{x^n\}_{n=1}^N$, generated by GARCH(2,2) model. The time series are given by 
\begin{align*}
& x^n_k = \sigma_k \epsilon_k, \\
& \sigma_k^2 = w + \sum_{i=1}^2\alpha_i x^n_{k-i} + \sum_{j=1}^2 \beta_j \sigma^2_{k-j},
\end{align*}
where $w>0$, $\alpha_i\geq 0$, $\beta_j \geq 0$ and $\epsilon_k$'s are I.I.D. standard normal distributed. Denote $ {\bm \alpha} = (\alpha_1, \alpha_2)$ and $ {\bm \beta} = (\beta_1, \beta_2)$. $2$ classes of GARCH time series are generated by setting parameters in Table \ref{garch_params}.

\begin{table}[h]
\centering
\begin{tabular}{c|ccc}
class & $w$   & $ {\bm \alpha}$      & ${\bm \beta}$       \\ \hline
1     & 0.5 & (0.4, 0.1) & (0.7, 0.5) \\
2     & 0.2 & (0.8, 0.5) & (0.4, 0.1)
\end{tabular}
\caption{Parameters for GARCH(2,2) time series.}
\label{garch_params}
\end{table}
For paths $x^n$ generated by the first row parameters in Table \ref{garch_params}, we label $y^n=1$ $($class $1$$)$, for the rest paths $x^n$ generated by the second row parameters in Table \ref{garch_params}, we label them by $y^n=2$ $($class $2$$)$. Thus, we generate paired data $\{(x^n, y^n)\}_{n=1}^N$.
\end{exam}

\begin{rem}
\label{class_rem}
It is important to note that we cannot directly apply  Proposition \ref{prop_LinearEstimation} here, because this $p(x)$ may not be continuous in $x$. Intuitively, it is  better to add nonlinearity on classifier $h(\cdot)$. The experiment in Section \ref{DirectedChain} verifies this intuition.
\end{rem}

In practice, the signature classifier \eqref{func_g} and its truncation \eqref{y_hat} can be applied to find the classification model $g(\cdot)$ to estimate $\hat{y}$ in other contexts. 
In Section \ref{sec_exp}, we shall apply the  logistic regression to Example \ref{garch_exam}, and the result shows that the use of the truncated signature to classify this GARCH(2,2) time series is significantly efficient. 


\section{Convolutional Signature Model}
\label{sec_cnnsig}
The main goal of this section is to introduce the Convolutional Signature (CNN-Sig) model. As we have seen in Remark \ref{rem: dimensions} in Section \ref{sec: 2.2}, the truncated signature suffers from the exponential growth of the number ${\bf d}_{m}$ of terms, when the dimension $d$ is large, and in this case both space and time complexity increase dramatically. We will use Convolutional Neural Network (CNN) to reduce this exponential growth to at most linear growth. CNN has been mostly used in analyzing visual imagery, where it takes advantage of the hierarchical patterns in image and assembles complex patterns by focusing on many small pieces of the picture. Convolutional layer convolves the input data with a small rectangular kernel, and the output data can be masked with an activation function. As there are some patterns between channels of a path, this motivates us to consider the  signature with CNN to address the high dimensional problem. 

Before introducing the CNN-Sig model, we shall explain that the signature transform can be viewed as a layer in the deep neural network model. 

\subsection{Signature as a Layer}
Signature transform can be viewed as a layer in deep neural networks and this is firstly proposed in \cite{NIPS2019_8574}. In the background of Python package {\bf signatory} \cite{signatory}, signature transform takes input tensor of shape $(b, n, d)$, corresponding to a batch of size $b$ of paths in $\R^d$ with $n$ observing points at times $\{t_j\}_{j=1}^n$, and returns a tensor of shape $(b, {\bf d}_{m})$ 
or a stream like tensor of shape $(b, n, {\bf d}_{m})$, where ${\bf d}_{m}$ is defined in Remark \ref{rem: dimensions} in Section \ref{sec: 2.2}. 
 Usually it omits the first term $1$ of the signature transform. Since the signature is also differentiable numerically with respect to each data points, the backpropagation calculation is available. In this way, the signature can be viewed as a layer in neural network.

\subsection{Convolutional Signature Model}
CNN, which has been proved to be a powerful tool in computer vision, is an efficient feature extraction technique. This idea has been used in \cite{liao2019learning} as well as the \textbf{``Augment"} module \cite{signatory} (but only $1$D CNNs are used). 
There are two cases of using $1$D CNNs. The first  case is to extract new sequential features of original paths and then paste them to the original path as extra dimensions. This method is not helpful in the high dimensional case and causes extra difficulty. The second case is that we use extracted sequential features directly from the $1$D CNN. It works as a dimension reduction technique but the challenge is that it causes loss of information. 

With the favor of the $2$D CNN, we are able to reduce the number of signature features and capture all information in the original path at the same time. Since the convolution here is different from the convolution concept in mathematics, we define it and present Example \ref{exam 2D CNN} to show the computational details for those who are not so familiar with CNN.

\begin{defn}[$2$D Convolution]
Let $*$ be an operation of element-wise matrix multiplication and summation between two matrices of the same shape, that is, $A := (a_{i,j})_{1\le i \le m,1\le j \le n}$ and $B := (b_{i,j})_{1\le i \le m,1\le j \le n}$ of the same size: 
$\, A * B = \sum_{i=1}^{m}\sum_{j=1}^{n} a_{i,j} b_{i,j} \, $. 
Suppose the input tensor is $M := (M_{i,j})_{1\le i\le I,1\le j \le J}$, a kernel window $K := (k_{i,j})_{1 \le i\le m,1\le j \le n}$ and a stride window $(s, t)$. The output $O:=(o_{p,q})$ of $2$D convolution is given by
\begin{equation}
o_{p, q} := (M_{i, j})_{1+(p-1)s\le i \le m +(p-1)s, 1+(q-1)t\le j\le n+(q-1)t } * K.
\end{equation}
\end{defn}

The shape of the output $O$ depends on how we treat the boundary specifically and does not play a crucial role here.

\begin{exam}[2D Convolution] 
\label{exam 2D CNN}
Let us consider a tensor $M := (M_{i,j})_{1\le i,j \le 5}$ and a kernel window  $K := (k_{i,j})_{1 \le i,j \le 3}$, 
\begin{equation*}
M := \left(\begin{array}{ccccc} 
2 & 1 & 0 & 2 & 0 \\
0 & 1 & 2 & 2 & 1 \\
0 & 0 & 0 & 1 & 1 \\
2 & 0 & 0 & 2 & 2 \\
0 & 2 & 0 & 1 & 1
\end{array}\right), \quad \text{ and } \quad 
K := \left(\begin{array}{ccc} 
0 & 1 & 0 \\
1 & 0 & -1 \\
-1 & -1 & -1
\end{array}\right), \text{ respectively}, 
\end{equation*} 
and a stride window $(1, 1)$. The output will be a $3\times 3$ tensor, denoted by $O=(o_{ij})_{1\leq i, j\leq 3}$, where each element $o_{i,j}$ of $O$ is given by the element-wise multiplication and summation of  
\begin{equation*} \widetilde{M}^{i,j} := (M_{k,\ell})_{i \le k \le i+2, j \le \ell \le j + 2} \end{equation*} and $ {K}$, i.e., $\, 
o_{i,j} = \widetilde{M}^{i,j} \ast K \, $ for $1 \le i , j \le 3$. For example, 
\begin{equation*}o_{11} = \left(\begin{array}{ccc} 
2 & 1 & 0 \\
0 & 1 & 2 \\
0 & 0 & 0  
\end{array}\right) * \left(\begin{array}{ccc} 
0 & 1 & 0 \\
1 & 0 & -1 \\
-1 & -1 & -1
\end{array}\right) = 2\cdot 0 + 1 \cdot 1 + 0\cdot 0 + \dots  + 0 \cdot (-1) = -1,\end{equation*}
\begin{equation*}o_{12} = \left(\begin{array}{ccc} 
 1 & 0 & 2\\
 1 & 2 & 2\\
 0 & 0 & 1
\end{array}\right) * \left(\begin{array}{ccc} 
0 & 1 & 0 \\
1 & 0 & -1 \\
-1 & -1 & -1
\end{array}\right) = 1\cdot 0 + 0 \cdot 1 + 2\cdot 0 + \dots  + 1 \cdot (-1) = -2,\end{equation*}

\begin{equation*}
\begin{split}
o_{13} = \left(\begin{array}{ccc} 
 0 & 2 & 0\\
 2 & 2 & 1\\
 0 & 1 & 1
\end{array}\right) * \left(\begin{array}{ccc} 
0 & 1 & 0 \\
1 & 0 & -1 \\
-1 & -1 & -1
\end{array}\right) = 1, \quad 
o_{21} = \left(\begin{array}{ccc} 
0 & 1 & 2 \\
0 & 0 & 0 \\
2 & 0 & 0
\end{array}\right) * \left(\begin{array}{ccc} 
0 & 1 & 0 \\
1 & 0 & -1 \\
-1 & -1 & -1
\end{array}\right) = -1, 
\end{split}
\end{equation*}
and so on. Therefore, the output $O$ is given by 
\begin{equation*}
O = \left(\begin{array}{ccc} 
-1 & -2 & 1 \\
-1 & -1 & -1 \\
0 & -5 &  -3 
\end{array}\right).\end{equation*}
\end{exam}

The Convolutional Signature model uses the $2$D CNN before the signature transform, and the structure of the convolutional signature model can be described in Figure \ref{CNN-Sig}. The convolution is implemented in channels. Since the signature is efficient in the time direction, we do not have to convolute the time direction. 

\begin{figure}[h]
	\centering
	\includegraphics[scale=0.4]{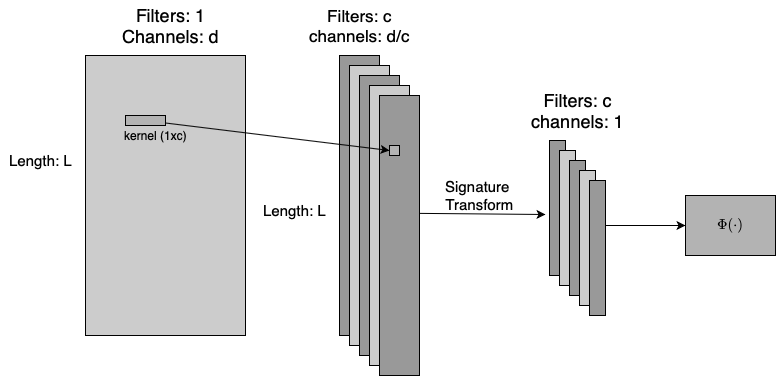}
	\caption{Convolutional neural network and signature transform connected by $\Phi$.}
	\label{CNN-Sig}
\end{figure}

\subsection{Number  of Features} \label{sec features}
Suppose $c\, (\leq d)$ is an integer such that $d$ is divisible by $c$ and let us fix the ratio $\gamma = {d}/{c}\in \N$. For the sake of simplicity of explanations, we set the number of features with kernel window of size $(1\times c)$ and stride $(1\times c)$. We illustrate our idea in the following example.

\begin{exam}
\label{exam convsig}
Let us consider a tensor $M := (M_{i,j})_{1\le i\le 5,1\le j \le 4}$ and $2$ kernel windows  $K_1 := (k^1_{i})_{1 \le i\le 2}$, $K_2 := (k^2_{i})_{1 \le i\le 2}$, 
\begin{equation*}
M := \left(\begin{array}{cccc} 
2 & 1 & 0 & 2  \\
0 & 1 & 2 & 2  \\
0 & 0 & 0 & 1  \\
2 & 0 & 0 & 2  \\
0 & 2 & 0 & 1 
\end{array}\right), 
\quad K_1 := \left(\begin{array}{ccc} 
-1 & 1
\end{array}\right)
\quad \text{ and } \quad 
K_2 := \left(\begin{array}{ccc} 
1 & 2
\end{array}\right).
\end{equation*}
By using a stride window $(1,2)$, we calculate the output $O = \{O_1, O_2\}$ with $O_l = (o^l_{i,j})_{1\le i\le 5, 1\le j\le 2}$, $l=1,2$. The computation is done in the same way as in Example \ref{exam 2D CNN}: 
$o^1_{1,1} = (2, 1)\ast (-1, 1)= -2+1 = -1$, 
$o^1_{2,2} = (2, 2)\ast (-1, 1) = -2 + 2 = 0$, 
$ o^2_{1,1} = (2, 1)\ast(1, 2) = 2 + 2 = 4$, 
$o^2_{2,2} = (2, 2)\ast(1, 2) = 2+4 = 6$.
Therefore, the output $O$ is given by
\begin{equation*}
O_1 = \left(\begin{array}{cc} 
-1 & 2  \\
1 & 0  \\
0 & 1  \\
-2 & 2  \\
2 & 1 
\end{array}\right),
\quad
O_2 = \left(\begin{array}{cc} 
4 & 4  \\
2 & 6  \\
0 & 2  \\
2 & 4  \\
4 & 2 
\end{array}\right).
\end{equation*}
In this example, since $K_1$ and $K_2$ are linear independent, we fully recover the input $M$ given $K_1, K_2$ and output $O$.
\end{exam}

Notice that since the first term in signature transform is always $1$, we can omit that, in order to save the computational memory. As shown in Figure \ref{CNN-Sig}, we start from one $d$-dimensional path with length $L$, by using such a convolutional layer, and we are resulted in $c$ paths with each of ${d}/{c}$-dimensional. Then we augment each path with extra time dimension and apply signature transform to each path truncated at depth $m$, which gives us the number of features 
\begin{equation}
N_{f} := c\cdot \frac{(d/c+1)^{m+1}-d/c-1}{d/c+1 - 1} = \frac{(\gamma+1)^{m+1}-\gamma-1}{\gamma^2} \cdot d 
\end{equation} 
many features by concatenating all $c$ filters. 
 These features can be used in any following neural network model. 
For example, a fully connected neural network in the simplest case, or a recurrent neural network (RNN) if we compute the sequence of the signature transform. 

The number $N_{f}$ of features grows linearly in $d$ by increasing $c$ linearly and fixing $\gamma$. Instead of optimizing this $N_{f}$ by setting $\gamma=\arg \min N_f$ directly, we can think $\gamma$ as a hyperparameter to be tuned to avoid overfitting problem. It can be easily seen that by setting $\gamma=1$, we reach a minimum of $N_f$ when $m\geq 3$. However, lower $\gamma$ will give us higher $c$, which increase the number of parameters in the CNN step. We consider the sum $N_f + (\frac{d}{\gamma})^2$ of number of features and the number of parameters in CNN.  
Moreover,  we can add a multiplier $\alpha$ to the second term, and then define a regularized number on $\gamma$,
\begin{equation}
\label{N_alpha}
N^\alpha(\gamma):=  \frac{(\gamma+1)^{m}-1}{\gamma^2}\cdot (\gamma + 1) \cdot d + \alpha \cdot \frac{d^2}{\gamma^2}.
\end{equation}
We can select a large real positive number $\alpha$. This will help us avoid the overfitting problem, when we are concerned about that the CNN layer fits the original paths too well and it sacrifices the prediction power.

\subsection{One-to-one Mapping}
Under the setup in Section \ref{sec features}, we can generalize Example \ref{exam convsig} and prove such a convolutional layer preserves all information of the original path. Suppose that $\{k^i\}_{i=1}^c$ are all $c$ convolutional kernels with $k^i=(k^i_1, \dots, k^i_c)$ for $i =1, \ldots , c$. Denote the square matrix \begin{equation*}\mathbf{K} := \left( \begin{array}{ccc}  
				k^1_{1} & \dots & k^1_{c} \\
				\vdots& \vdots& \vdots \\
				k^c_{1}& \dots & k^c_{c}
				\end{array} \right).\end{equation*}
Let the original path be 
$\mathbf{x} =\left( x_{t_1},\dots, x_{t_n} \right)^{\mathrm T}$, 
%
$x_{t_j} = \left( x_{t_j}^1, \dots, x_{t_j}^d \right)$ 
and the output path $\{\tilde{x}_i\}_{i=1}^c$, where $\tilde{x}_i=\left( \tilde{x}_{t_1, i},\dots, \tilde{x}_{t_n, i} \right)^{\mathrm T}$ with $\tilde{x}_{t_j, i} = \left( \tilde{x}_{t_j, i}^1, \dots, \tilde{x}_{t_j, i}^{\gamma} \right)$, $1\leq i\leq c$. The CNN layer can be represented in equation as 
\begin{equation}\label{eq:Kx}
\mathbf{K} \cdot \left( {x}_{t_j}^{lc+1}, \dots, {x}_{t_j}^{(l+1)c} \right)^{\mathrm T} = \left(\tilde{x}_{t_j, 1}^l,\dots, \tilde{x}^l_{t_j, c} \right)^{\mathrm T}, \ \ 1\leq l\leq \gamma,\ 1\leq j\leq n.\end{equation}

\begin{lem}
If $\mathbf{K}$ is of full rank, then this \text{CNN} layer is a one-to-one map.
\end{lem}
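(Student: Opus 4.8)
The plan is to observe that the CNN layer described by \eqref{eq:Kx} is a \emph{linear} map between two vector spaces of the same (finite) dimension, so that being one-to-one is equivalent to the representing matrix being nonsingular, and then to exhibit that matrix in block-diagonal form with every diagonal block equal to $\mathbf{K}$.

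First I would count dimensions. The input $\mathbf{x}$ ranges over $(\R^d)^n\cong\R^{nd}$, while the output $\{\tilde{x}_i\}_{i=1}^c$ ranges over $c$ copies of $(\R^{\gamma})^{n}$, i.e. over $\R^{nc\gamma}$; since $\gamma = d/c\in\N$ we have $c\gamma = d$, so both the domain and the codomain have dimension $nd$. Thus \eqref{eq:Kx} defines an endomorphism of a finite-dimensional vector space, and for such a map ``one-to-one'' is equivalent to ``bijective'' is equivalent to ``the representing matrix is nonsingular.''

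Next I would put that matrix into block form. Index the coordinates of $\mathbf{x}$ by the pairs $(j,l)$ with $1\le j\le n$ and $1\le l\le\gamma$, letting the $(j,l)$-block be the sub-vector $v_{j,l} := \left(x_{t_j}^{(l-1)c+1},\dots,x_{t_j}^{lc}\right)^{\mathrm T}\in\R^c$; these blocks realize the partition of $\{1,\dots,d\}$ into $\gamma$ consecutive strings of length $c$, which is well defined precisely because $c\mid d$. Index the output coordinates by the same pairs, with $(j,l)$-block $w_{j,l} := \left(\tilde{x}_{t_j,1}^l,\dots,\tilde{x}_{t_j,c}^l\right)^{\mathrm T}\in\R^c$. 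Then \eqref{eq:Kx} reads $w_{j,l} = \mathbf{K}\,v_{j,l}$ for every $(j,l)$, and distinct blocks do not interact, so in these coordinates the CNN layer is represented by a block-diagonal matrix with $n\gamma$ diagonal blocks, each equal to $\mathbf{K}$.

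Finally, since $\mathbf{K}$ is a $c\times c$ matrix of full rank it is invertible, so the block-diagonal matrix has determinant $(\det\mathbf{K})^{n\gamma}\neq 0$ and is itself invertible; concretely, $v_{j,l} = \mathbf{K}^{-1} w_{j,l}$ recovers every input coordinate $x_{t_j}^{k}$ from the output, which shows the layer is one-to-one. There is essentially no hard step here: the only thing to be careful about is the index bookkeeping, together with the remark that the standing hypothesis $\gamma = d/c\in\N$ is exactly what makes the channel blocks partition $\{1,\dots,d\}$ cleanly; the substance is just the elementary linear-algebra fact that a square matrix with trivial kernel is invertible. (Combined with the uniqueness of the signature transform for geometric rough paths after augmenting with a time coordinate, this is what will let us conclude in Theorem~\ref{thm:App-CNN-Sig} that the whole CNN-Sig model retains all information of the path.)
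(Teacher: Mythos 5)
Your proposal is correct and follows essentially the same route as the paper: both arguments reduce to the observation that the full-rank square matrix $\mathbf{K}$ is invertible, so each block of the input is recovered as $\mathbf{K}^{-1}$ applied to the corresponding output block. Your block-diagonal/determinant framing is just a more explicit packaging of the same inversion step the paper uses.
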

\begin{proof}
Since $\mathbf{K}$ is square and of full rank, it is invertible. 
\begin{equation*}\left( {x}_{t_j}^{lc+1}, \dots, {x}_{t_j}^{(l+1)c} \right)^{\mathrm T} =\mathbf{K}^{-1} \cdot  \left(\tilde{x}_{t_j, 1}^l,\dots, \tilde{x}^l_{t_j, c} \right)^{\mathrm T}, \ \ 1\leq l\leq \gamma,\ 1\leq j\leq n.\end{equation*}
If follows that the original path $\mathbf{x}$ can be fully recovered by $\tilde{x}:=\{\tilde{x}_i\}_{i=1}^c$. 
\end{proof}

We denote the CNN layer transform as $\mathbf{K}: \mathcal{V}^1([0,T], \R^d) \to \mathcal{V}^1([0,T], \R^{d/c+1})^c$. Here, plus $1$ in the dimension $(d/c) + 1$ comes from the time dimension we add to each convoluted paths. 

In accordance with practical case, we consider approximating functions with domain in a subspace of $\mathcal{V}^1([0,T], \R^d)$ that is observed at finite time stamps and connected by linear interpolation between consecutive points. More precisely, define
\begin{equation}
\begin{split}
\mathcal{V}^1_D([0,T], \R^d):= \{x \in \mathcal{V}^1([0,T], \R^d): \text{there exist } n\in \mathbb{N} \text{ and } 0=t_0 < \cdots < t_n = T \\
\text{ such that } 
x(t) = \frac{t_{i}-t}{t_{i} - t_{i-1}} x(t_{i-1}) + \frac{t - t_{i-1}}{t_{i} - t_{i-1}} x(t_{i}) \\
 \text{ for } t_{i-1} \le t \le t_{i}, i = 1, \ldots , n \}.
\end{split}
\end{equation}
Suppose $f:\mathcal{V}^1_D([0,T], \R^d) \to \R$ is the continuous function we need to estimate. Then we have the following theorem.

\begin{thm}[Approximation by the CNN-Sig model]\label{thm:App-CNN-Sig}
Let $K$ be a compact set in $\mathcal{V}_D^1([0,T], \R^d)$. Suppose that $f$ is Lipschitz in $K$. For any $\epsilon>0$ there exist a CNN layer $\mathbf{K}$, an integer $m$, and a neural network model $\Phi$ such that  
\begin{equation*}\sup_{x\in K}| f(x) - \Phi\circ S^m \circ \mathbf{K}(x) |<\epsilon.
\end{equation*}
\end{thm}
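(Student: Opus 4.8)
The plan is to reduce the approximation problem for the composite map $\Phi\circ S^m\circ\mathbf{K}$ to the already-established universal approximation property of linear functionals on signatures (Proposition \ref{prop_LinearEstimation}), exploiting the fact (Lemma above) that a full-rank CNN layer $\mathbf{K}$ is a bijection between path spaces. First I would fix $\mathbf{K}$ to be \emph{any} full-rank $c\times c$ matrix, for instance the identity block pattern; by the one-to-one mapping lemma, $\mathbf{K}:\mathcal{V}^1_D([0,T],\R^d)\to\mathcal{V}^1_D([0,T],\R^{\gamma+1})^c$ is a continuous bijection onto its image, with continuous inverse (it is linear in each block, hence bi-Lipschitz on bounded sets). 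Consequently $\widetilde{K}:=\mathbf{K}(K)$ is a compact subset of the product space $\mathcal{V}^1([0,T],\R^{\gamma+1})^c$, and the function $\tilde f:=f\circ\mathbf{K}^{-1}$ is well-defined and Lipschitz (hence continuous, of finite $1$-variation in the relevant sense) on $\widetilde K$, because it is a composition of a Lipschitz function with a bi-Lipschitz map on a bounded set.

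Next I would apply the universal approximation result to $\tilde f$. Since each convoluted-and-time-augmented path $\tilde x_i$ has a monotone time coordinate, its signature $S(\tilde x_i)$ uniquely determines $\tilde x_i$ and is a geometric $1$-rough path; stacking the $c$ components, the map $x\mapsto (S(\tilde x_1),\dots,S(\tilde x_c))$ is injective on $K$. By Proposition \ref{prop_LinearEstimation} applied componentwise (or to the direct-sum path), there is a linear functional $l^{\epsilon/2}$ on the appropriate tensor algebra with $\sup_{x\in K}|\tilde f(\mathbf{K}(x)) - \langle l^{\epsilon/2}, (S(\tilde x_1),\dots,S(\tilde x_c))\rangle|<\epsilon/2$. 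By the factorial decay of signature terms (Remark \ref{exam_decay}) on the bounded set $\widetilde K$, truncating at a sufficiently large depth $m$ costs at most another $\epsilon/4$, so a linear functional on $S^m(\tilde x_1)\oplus\cdots\oplus S^m(\tilde x_c)$ achieves accuracy $3\epsilon/4$. Finally, a linear functional is in particular realizable by a (one-layer, or slightly deeper to be safe) neural network $\Phi$ acting on the concatenated feature vector of dimension $N_f$; alternatively one invokes the standard neural-network universal approximation theorem to approximate the continuous map from $S^m\circ\mathbf{K}(x)$ to $\tilde f$ within $\epsilon/4$ on the compact image, giving the bound $\sup_{x\in K}|f(x)-\Phi\circ S^m\circ\mathbf{K}(x)|<\epsilon$.

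The main obstacle I anticipate is the careful handling of the interface between the finite, discrete-time, piece-wise-linear setting ($\mathcal{V}^1_D$) and the infinite-dimensional path-space statements. In particular: (i) verifying that $\mathbf{K}(K)$ really lands in a single compact subset of a fixed product space — different paths in $K$ may have different numbers of breakpoints $n$, so the compactness and the continuity of $\mathbf{K}^{-1}$ must be argued through the $\V^p$-norm rather than through coordinates; (ii) checking that the composed functional $\tilde f\circ\mathbf{K}$ satisfies the "finite $p$-variation / continuity" hypothesis that Proposition \ref{prop_LinearEstimation} requires of the target function, which follows from the Lipschitz assumption on $f$ together with the bi-Lipschitz bound on $\mathbf{K}$ restricted to the bounded set $K$; and (iii) making precise that a linear form on $T((\R^{\gamma+1}))^{\oplus c}$ restricted to truncation depth $m$ is exactly a linear (hence trivially neural-network-representable) map on the $N_f$-dimensional feature vector produced by the model. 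None of these are deep, but they are where the proof must be written with care; the genuinely new content over Proposition \ref{prop_LinearEstimation} is simply the observation that a full-rank convolution is an invertible change of coordinates that preserves the universality while reducing feature count.
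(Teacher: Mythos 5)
Your overall architecture (fix a full-rank $\mathbf{K}$, pass to $\tilde f=f\circ\mathbf{K}^{-1}$, use injectivity of the time-augmented signatures, truncate, and finish with $\Phi$) parallels the paper's, but your central approximation step has a genuine gap. Proposition \ref{prop_LinearEstimation} cannot be applied ``componentwise (or to the direct-sum path)'' to produce a linear functional $l^{\epsilon/2}$ with $\sup_{x\in K}\bigl|\tilde f(\mathbf{K}(x))-\langle l^{\epsilon/2},(S(\tilde x_1),\dots,S(\tilde x_c))\rangle\bigr|<\epsilon/2$. A linear form on the direct sum of the $c$ tensor algebras, evaluated at the tuple of block signatures, is necessarily additively separable, i.e.\ of the form $\sum_{i=1}^c\langle l_i,S(\tilde x_i)\rangle$; this class is a vector space but not an algebra (the shuffle identity linearizes products of functionals of the \emph{same} signature only), so Stone--Weierstrass does not apply, and a joint function as simple as $x\mapsto \tilde x_1(T)\,\tilde x_2(T)$ cannot be approximated within it. If instead you mean Proposition \ref{prop_LinearEstimation} applied to the concatenated $c(\gamma+1)$-dimensional path, the resulting linear form acts on the full signature of that path, whose cross iterated integrals are precisely the features the CNN-Sig model does \emph{not} compute; then your point (iii) --- that the functional restricts to a linear map on the $N_f$-dimensional concatenated block-signature vector --- fails. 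This is exactly the issue the paper flags: the factored function $\hat h$ ``is not necessarily linear, because there might be some dependence among $\{\tilde x_i\}_{i=1}^c$,'' and the nonlinearity of $\Phi$ is what carries that cross-block dependence.

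Your fallback sentence --- invoke neural-network universal approximation for ``the continuous map from $S^m\circ\mathbf{K}(x)$ to $\tilde f$'' --- is essentially the paper's actual route, but as written it skips the step that needs proof: $f(x)$ is not a well-defined function of the depth-$m$ truncated block signatures (two paths in $K$ may share $S^m\circ\mathbf{K}(x)$ yet have different $f$-values). One must first factor $\tilde f$ through the \emph{untruncated} block signatures via a Lipschitz $\hat h$ (using injectivity of the time-augmented signature together with compactness of $K$), and then show that replacing $S$ by $S^m$ changes $\hat h$ by at most $\epsilon/2$ uniformly on $K$; the paper obtains this uniform truncation bound from the Lipschitz property of $\hat h$ combined with signature decay (citing Lemma 4.1 of the Min--Hu reference), and only then applies the standard universal approximation theorem to $\hat h$ on the compact image. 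In your write-up the factorial-decay argument is attached to the invalid linear-functional step rather than to this factorization, so the truncation step remains unjustified on your alternative route as well.
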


\begin{proof}
For every $ x\in \mathcal{V}_D^1([0,T], \R^d)$, we rewrite $f(x)$  as a function of $\tilde{x} = \{\tilde{x}_{i}\}_{i=1}^{c}$ in \eqref{eq:Kx}: 
\begin{equation}
\label{eq_cnn}
f(x) = f(\mathbf{K}^{-1}(\tilde{x})) = f\circ \mathbf{K}^{-1}(\tilde{x})=: h(\tilde{x}).
\end{equation}
It follows that $h = f \circ \mathbf{K}^{-1} $ is a continuous function. Since $S(\tilde{x}_i)$ is a geometric rough path and characterize the path $\tilde{x}_i$ uniquely for each $1\leq i\leq c$, there exists a continuous function $\hat{h}: (T(\R))^c \to \R$  
such that
\begin{equation*}h(\tilde{x}) = \hat{h}(S(\tilde{x}_1), \dots, S(\tilde{x}_c)).\end{equation*}
The existence follows from the compactness and that the signature map is continuous and one-to-one. Moreover, since $f$ is Lipschitz, we have that $h$ is Lipschitz and hence $\hat{h}$ is also Lipschitz.
The compactness of $K$ implies that the image of $S\circ\mathbf{K}$ is also compact, hence $h(\tilde{x})$ can be approximate arbitrarily well be truncated signatures up to a uniform truncation depth $m$ for all data in the set $K$. The existence of such $m$ is induced by the proof of \cite[Lemma 4.1]{pmlr-v139-min21a} and Lipschitz property. That is, there exists an integer $m$, such that 
\begin{equation}\label{eq_truncated_sig}
\sup_{x\in K}|\hat{h}(S(\tilde{x}_1), \dots, S(\tilde{x}_c)) - \hat{h}(S^m(\tilde{x}_1), \dots, S^m(\tilde{x}_c))| \le \frac{\epsilon}{2}.
\end{equation}

This $\hat{h}$ is not necessarily linear, because there might be some dependence among $\{\tilde{x}_i\}_{i=1}^c$, but it can be approximated by a neural network model arbitrarily well. A wide range of $\Phi$ can be chosen. For example, a fully connected shallow neural network with one wide enough hidden layer and some activation function would work, see \cite{FUNAHASHI1989183}, \cite{cybenko1989}; or a narrow but deep network, see \cite{pmlr-v125-kidger20a}. That is, there exists $\Phi$ such that 
\begin{equation}
\label{eq_neuralnetwork}
\sup_{x\in K}\left| \Phi(S^m(\tilde{x}_1), \dots, S^m(\tilde{x}_c)) -  \hat{h}(S^m(\tilde{x}_1), \dots, S^m(\tilde{x}_c)) \right|\le \frac{\epsilon}{2}.
\end{equation}
By combining \eqref{eq_cnn}, \eqref{eq_truncated_sig}, \eqref{eq_neuralnetwork} together, we get the desired result. 
\end{proof}

In the CNN-Sig model, the CNN layer can be understood as data dependent encoder which help us find the best way of encoding original path to several lower dimensional paths. On one hand, a large $c$ will result in overfitting problem of CNN layer. On the other hand, small $c$ will produce large number of features for $\Phi$, and then $\Phi$ may has the overfitting problem. This tradeoff can be balanced by minimizing $N^\alpha(\gamma)$ in equation \eqref{N_alpha}. Thus, although the choice of $c$ does not affect the universality of the model, it could help with resolving the overfitting problem.  

\begin{rem}
When we do experiments of the CNN-Sig model, this model works even better compare to plain signature transform of original path on testing data, it is because the CNN-Sig model reduces the number of features and thus overcome the overfitting problem better than direct signature transform.
\end{rem}

Moreover, the signature transform can be performed in a sequential way. Then we can choose a RNN model (GRU or LSTM) for $\Phi$. Some other candidates for $\Phi$ can be Attention model like Transformer, $1d$-CNN and so on, which might help us get better predictions. Thus, this CNN-Sig model is quite flexible and can be incorporated with many other well developed deep learning model as $\Phi$, which depends specifically on the task. In practice, we can use a different stride size to allow some overlap during convolution and reduce the number of filters. The one-to-one mapping property may be lost in this case if we choose small number of filters, but it results in less overfitting. Another alternative is that we can also convolute over time dimension, provided that correlation over time is of importance to the sequential data.


\section{Experiments}
\label{sec_exp}
In this section, several results of the experiments are provided for the purpose of exhibiting the performance of the signature classifier and the CNN-Sig model. Sections \ref{secCLGARCH} and \ref{DirectedChain} show that the signature classifier can be a nice candidate for the time series classification problem. In sections \ref{sec CCN-Sig regression} and \ref{secSAS}, we apply the CNN-Sig model to high-dimensional tasks, including the standard high-dimension datasets, approximation of maximum-call European payoff and sentimental analysis.

\subsection{Classification of GARCH Time Series}
\label{secCLGARCH}
The generalized autoregressive conditional heteroskedasticity (GARCH) process is usually used in econometrics to describe the time-varying volatility of financial time series \cite{bollerslev1986generalized,engle1982autoregressive}. GARCH provides a more real-world context than other models when predicting the financial time series, compare to other time series model like ARIMA.
We apply logistic regression to Example \ref{garch_exam}, i.e. the goal is to estimate $g(S^m(x)) = (\hat{p}_0, \hat{p}_1)$ in \eqref{func_g}, where 
\begin{equation}
\label{logis_reg}
\log \frac{\hat{p}_1}{1-\hat{p}_1} = \langle l, S^m(x) \rangle,
\end{equation} 
subject to $\hat{p}_0 + \hat{p}_1 =1$, $l$ is a linear functional on $T^m(\R^d)$ to be chosen such that the cross entropy 
\begin{equation}
E(l) = -\sum_{i=1}^N (y^i \log \hat{p}_i + (1-y^i)\log(1-\hat{p}_i))
\end{equation}
 is minimized, and we predict labels by $\hat{y}^i = \argmax_i \hat{p}_i$. $500$ samples are generated for each class and we use $70\%$ of each class as training data and $30\%$ of each as testing data.
By using $m=4$, we get training accuracy $96.4\%$ and testing accuracy $97.0\%$. The confusion matrix is given below in Table \ref{confusion}.

\begin{table}[h]
\centering
\begin{tabular}{|c|c|c|}
\hline
\backslashbox{True}{Predicted} & 0   & 1          \\ \hline
0         & 343 & 7 \\ \hline
1         & 18 & 332 \\ \hline
\end{tabular}
\quad \quad
\begin{tabular}{|c|c|c|}
\hline
\backslashbox{True}{Predicted} & 0   & 1          \\ \hline
0         & 147 & 3 \\ \hline
1         & 6 & 144 \\ \hline
\end{tabular}
\caption{Training (left) and testing (right) confusion matrics.}
\label{confusion}
\end{table}

\subsection{Classification of Directed Chain Discrete Time Series}
\label{DirectedChain}
In the study of mean-field interaction and financial systemic risk problems, \cite{detering2018directed} propose a countably many particle system of diffusion processes, coupled through an infinite, chain-like directed graph, and discuss a detection problem of mean-field interactions among diffusive particles. In Remark 4.5 of \cite{detering2018directed}, a discrete time analogue of the mean-reverting diffusions on the directed chain is also proposed. 

We shall discuss a classification problem of such time series data partially observed from the directed chain graph. More specifically, we analyze an identically distributed time series data $\{X_n\}_{n\geq 1}$ and $\{\widetilde{X}_n\}_{n\geq 1}$ parametrized by $a, u \in [0, 1]$ and defined recursively by 
\begin{equation}
X_{n} = a X_{n-1} + (1-a) ( u \widetilde{X}_{n-1} + (1-u) \mathbb E [ X_{n-1}] ) + \varepsilon_{n}, \quad n \ge 1, 
\end{equation}
where we assume that $X_{0} = \widetilde{X}_{0} = 0 $ for simplicity, the distribution of $\{X_{n}, n \ge 0 \}$ is identical with that of $\{ \widetilde{X}_{n}, n \ge 0 \} $ and $\varepsilon_{n}$, $n \ge 1$ are independent, identically distributed standard normal random variables, independent of $ \{ \widetilde{X}_{n}\}_{n \ge 1}$. 
The parameter $u\in[0,1]$ measures how much $X_n$ depends on its neighborhood and $1-u$ measures how much $X_{n}$ depends on the common distribution. 
$X$ and $\widetilde{X}$ have the same distribution with the moving average representation:
\begin{equation}
\label{directed}
\begin{split}
X_n & =  \sum_{0\leq l\leq k\leq n-1}\binom{k}{l} u^l (1-a)^l a^{k-l} \epsilon_{n-k, l}, \quad \\
 \widetilde{X}_n &=\sum_{0\leq l\leq k\leq n-1}\binom{k}{l} u^l (1-a)^l a^{k-l} \epsilon_{n-k, l+1}, \ \ n\geq 1, 
 \end{split}
\end{equation}
where $\{\varepsilon_{n,k}$, $n, k \ge 0\}$ is an independent, identically distributed array of standard normal random variables. 

Suppose that our only observation is $\{X_n\}_{n\geq 1}$, but  both $\{\widetilde{X}_n\}_{n\geq 1}$ and $u$ are hidden to us. Our question is that given the access to $\{X_n\}_{n\geq 1}$ generated by different $u$, can we determine their classes?

In this part, we first set the default parameters and generate training and testing paths according to \eqref{directed}.
First we initial some parameters: $a=0.5$, $u=0.2$ or $0.8$ for classification task, $N=100$ is the time steps, $1/N$ is the variance of $\epsilon$.In order to generate paths, we generate a $n \times (n+1)$ matrix $\mathcal{E}$ of the error terms $\epsilon$, and then pick the column we need for each $n$. The summation takes time $O(N^2)$ and we have to range $n$ from $1$ to $N$. The time complexity is the order of $O(N^3)$.
We simulate 2000 training paths and 400 testing paths for this task.

\bigskip
\noindent
\textbf{Method 1: Logistic Regression} 
In this method, we use 2000 training paths: 1000 for $u=0.2$ and 1000 for $u=0.8$. 
Calculating the signature transform of these paths, augmented with time dimension, up to degree 9, we build a Logistic Regression model on the signatures of training data and test this model, see equation \eqref{logis_reg}.

The result is shown in Table \ref{table_LR}. 
We observe that signature does capture useful features for $u$ in these special time series.

\begin{table}[H]
\centering
\begin{tabular}{l | l}
\hline
Training Acc & Testing Acc \\ \hline
0.7465       & 0.7375     \\ \hline
\end{tabular}
\caption{Training accuracy and testing accuracy on Logistic regression.}
\label{table_LR}
\end{table}
\vspace{0.2cm}

\smallskip
\noindent
\textbf{Method 2: Deep Neural Network} We build a Neural Network model in order to get a better result. We use $4$ hidden layers with $256, 256, 128, 2$ units respectively. For first 3 layers, we use "ReLu" as activation function, for last layer, we use "Softmax" activation function as the approximated probability values. After training for 20 epochs, the result is shown in Table \ref{table_NNsig}.

\begin{table}[H]
\centering
\begin{tabular}{l | l}
\hline
Training Acc & Testing Acc \\ \hline
0.8930       & 0.8925     \\ \hline
\end{tabular}
\caption{Training accuracy and testing accuracy on NN.}
\label{table_NNsig}
\end{table}

This 4 layer neural network model produces better accuracy than logistic regression. The reason follows Remark \ref{class_rem}. Logistic regression trains a linear classifier, but it cannot be used to estimate $p(\cdot)$ efficiently, because $p(\cdot)$ is not continuous in $x$. This DNN model add nonlinearity to $h(\cdot)$,s and hence works better.

\subsection{High Dimensional Time Series}
\label{sec CCN-Sig regression}
Signature is an efficient tool as a feature map for high frequency sequential data to reduces the number of features. However, the number of signature terms increases exponentially as dimension (or channels in the language of PyTorch) increasing. In Section \ref{sec_cnnsig}, we proposed the CNN-Sig model to address this problem. We test our model 
by applying it in both regression and classification problem.

\bigskip
\noindent
\textbf{Experiments - Regression Problem for Maximum-Call Payoff}

We investigate our model on a specific rainbow option, high-dimension European type maximum call option. In other words, we want to use our CNN-Sig model to estimate the payoff 
$$\max_{1\leq k \leq d}((X^k_T - K)^+),$$
where $T$ is terminal time, $K$ is strike price, superscript $k$ represents the $k$-th coordinate of this $d$-dimension path. If $X^k_T$ is smaller than $K$ for all $1\le k\le d$, this payoff is zero. Otherwise the payoff would be the maximum of $(X^k_T - K)$ over those $k$ satisfies $X^k_T\geq K$.
Result of this experiment may motivate us to use CNN-Sig model in high dimensional optimal stopping problem from financial mathematics.

Because of the limitation of exponential growth in the number of features, we use lower $d=6, 10, 12, 20$ to compare the performance between plain signature transform and CNN-Sig model. Then we apply this model to test its performance with higher dimension $d=50$.

\begin{table}[h]
\centering
\begin{tabular}{c|cccc|cccc}
\hline
   & \multicolumn{4}{c|}{Sig+LR}                                                              & \multicolumn{4}{c}{CNN-Sig}                                                           \\ \cline{2-9} 
   & \multicolumn{2}{c|}{Training}                         & \multicolumn{2}{c|}{Testing}     & \multicolumn{2}{c|}{Training}                         & \multicolumn{2}{c}{Testing}      \\ \cline{2-9} 
d  & \multicolumn{1}{c|}{MAE} & \multicolumn{1}{c|}{$R^2$} & \multicolumn{1}{c|}{MAE} & $R^2$ & \multicolumn{1}{c|}{MAE} & \multicolumn{1}{c|}{$R^2$} & \multicolumn{1}{c|}{MAE} & $R^2$ \\ \hline
6 ($\gamma=2$)  &            0.001       &            1.000                &          0.101         &   0.538    &             0.020     &          0.986           &            0.030        &    0.972   \\
10($\gamma=2$) &        0.000      &         1.000        &      0.124        &   0.806     &             0.033             &          0.988      &          0.062             &    0.962   \\
12($\gamma=2$) &            0.000        &     1.000          &        0.153       &  0.821     &               0.048           &           0.981       &         0.111           &   0.924    \\
20($\gamma=1$) &         0.000        &         1.000       &       0.225             &    0.838   &        0.177                  &             0.916               &             0.203           &    0.892    \\ \hline
\end{tabular}
\caption{Training and testing mean absolute error(MAE) and $R^2$ for the direct signature transform plus linear regression (Sig+LR) and the CNN-Sig model with $\Phi$ as a fully connected neural network.}
\label{hd_lr}
\end{table}

We generate $1000$ training paths and $1000$ testing paths for cases of $d=6,10,12$, and generate $3000$ training paths and $1000$ testing paths for case $d=20$. All stock price paths follows Black-Scholes model.

For all $4$ cases, we consider $m=4$ as the signature depth. For $\Phi$ in the CNN-Sig model, we use the same structure, $2$ fully connected layers followed by ReLu activation function and then a fully connected layer. We did not apply any technique for avoiding overfitting problem in the CNN-Sig model to make this comparison fair. The result for comparison is shown in Table \ref{hd_lr}. We can see that for all these 4 cases, the CNN-Sig model beat direct signature transform. Since the CNN-Sig model reduce the number of features, it can help avoid overfitting problem compare to Sig+LR. We produce the QQ plots for training and testing results of the CNN-Sig model, see Figure \ref{QQ_low}.

\begin{figure}[h]
\centering
	\subfloat[train d=6]{\includegraphics[width=1.5in]{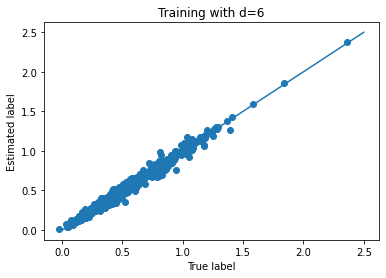}} 
	\subfloat[test d=6]{\includegraphics[width=1.5in]{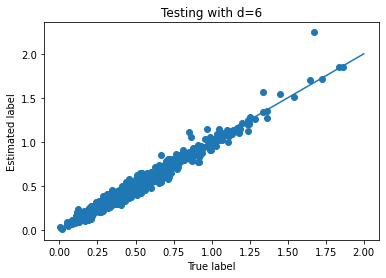}}\\
	\subfloat[train d=10]{\includegraphics[width=1.5in]{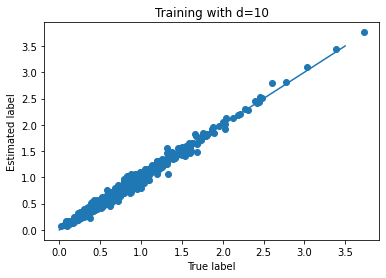}}
	\subfloat[test d=10]{\includegraphics[width=1.5in]{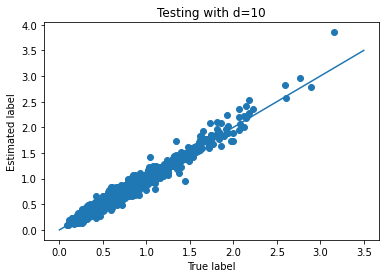}} \\
	\subfloat[train d=12]{\includegraphics[width=1.5in]{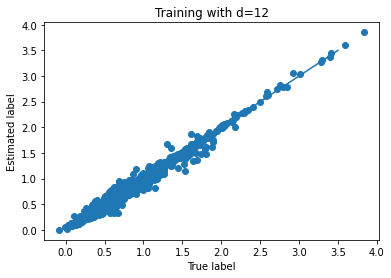}}
	\subfloat[test d=12]{\includegraphics[width=1.5in]{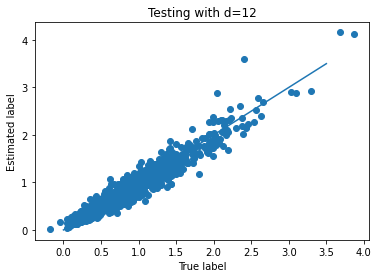}}\\ 
	\subfloat[train d=20]{\includegraphics[width=1.5in]{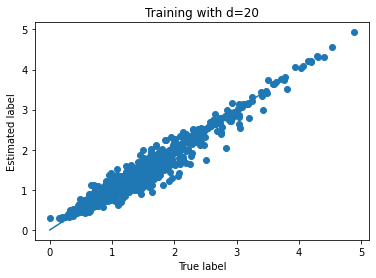}}
	\subfloat[test d=20]{\includegraphics[width=1.5in]{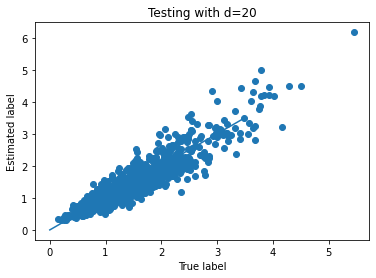}}\\
	\caption{QQ plot for training and testing result for lower dimensional regression with $d=6,10,12,20$ using the CNN-Sig model.}
	\label{QQ_low}
\end{figure}

For $d=50$, where the plain Sig+LR becomes not applicable, we use the same CNN-Sig structure as lower $d$ cases for training. The training MAE is $0.206$ with $R^2=0.982$ and testing MAE is $0.751$ with $R^2=0.797$. The QQ plot of training and testing results is in Figure \ref{QQ_high}. In this experiment, we show that CNN-Sig algorithm could be a good candidate in the high dimensional regression problem where plain signature is not applicable. But since CNN-Sig will add non-linearity here, we are not able to price this option in the same way as \cite{arribas2018derivatives}. This will be left as our future research.

\begin{figure}[h]
\centering
	\subfloat[d=50]{\includegraphics[width=2.3in]{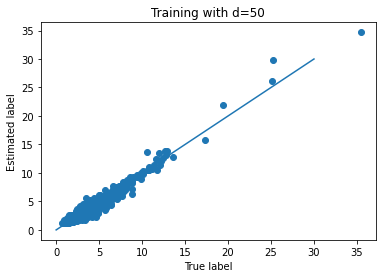}} 
	\subfloat[d=50]{\includegraphics[width=2.3in]{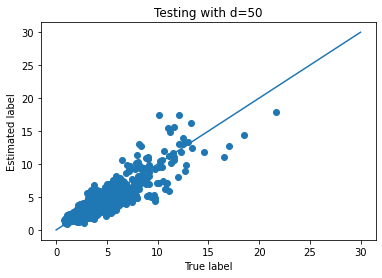}}
	\caption{QQ plot for training and testing result for regression task with $d=50$ using CNN-Sig model.}
	\label{QQ_high}
\end{figure}

\bigskip
\noindent
\textbf{Experiments - Classification}

We apply the CNN-Sig model to different high dimensional times series from \cite{baydogan2015dataset} and \cite{tsclass_2020}. As suggested in \cite{tsclass_2020}, all experiments are compared with a benchmark model ROCKET \cite{rocket}. The results are evaluated over 5 independent trials and listed in Table \ref{hd-ts}. ROCKET is known to be a fast and accurate classification method, the experiment results show that the CNN-Sig model is competitive and fast after a model selection procedure via $k$-fold cross validation.\footnote{All experiments are trained on a server with Intel Core i9-9820X (3.30GHz) and four RTX 2080 Ti GPUs} 


\begin{table}[h]
\centering
\begin{tabular}{lcc}
\hline
Datasets      & ROCKET & CNN-Sig \\ \hline
PEMS-SF & 0.810(0.014) & {\bf0.817(0.010)}  \\
JapaneseVowels        & {\bf 0.960(0.002)}        & 0.940(0.017)           \\
FingerMovement     & 0.500(0.01)        & {\bf0.514(0.034)}       \\
FaceDetection & {\bf0.597(0.004)} & 0.553(0.001)  \\
PhonemeSpectra & 0.035(0.002)        & {\bf0.152(0.006) }      \\
MotorImagery         & {\bf0.620(0.007) }      & 0.524(0.05)       \\
Heartbeat   & {\bf0.729(0.011) }       & 0.723(0.017)   \\ \hline
Training Time & 353.5 & {\bf209.1} \\  \hline 
\end{tabular}

\caption{Testing accuracy, standard deviation and total training time (s) for all high dimensional time series datasets.}
\label{hd-ts}
\end{table}

\subsection{Sentiment Analysis by Signature}
\label{secSAS}
In Natural Language Processing (NLP), text sentence can be regarded as sequential data. A conventional way to represent words is using high dimensional vector, which is called word embedding. These kind of word embedding is usually of $50, 100, 300$ dimension. Using plain signature transform becomes extremely difficult because of these high dimensions.
We apply our CNN-Sig model to address this problem. The dataset we use is IMDB movie reviews, \cite{maas-EtAl:2011:ACL-HLT2011}.

This IMDB dataset contains 50,000 movie reviews, each of them is labelled by either "pos" or "neg", which represent \textbf{Positive} for \textbf{Negative} respectively. The IMDB dataset is split into training and testing evenly. For training part, we use 17500 samples for training the model, and use the other 7500 samples as validation dataset. A 100-dimension word embedding GloVe 100d \cite{pennington2014glove} is used as the initial embedding, this high dimension restricts us to use plain signature transform. In our model, by setting $\gamma$ to be small, we use 1 convolutional 2d layer to reduce the dimension from 100 to $c$ paths with each of $\gamma+1$ dimensional augmented by extra time dimension. 
The architecture is shown in Figure \ref{imdb_nn}.

\begin{figure}[h]
	\centering
	\includegraphics[width=12cm]{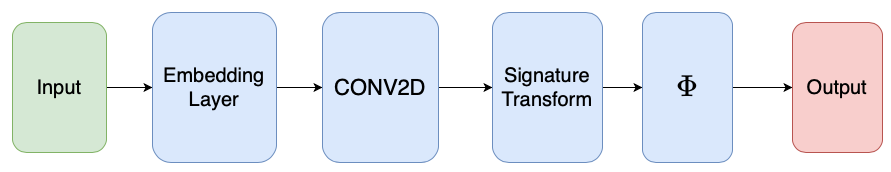}
	\caption{Convolutional Signature neural network model for IMDB dataset.}
	\label{imdb_nn}
\end{figure}

The result is shown in Table \ref{imdb} and the testing accuracy has been improved to {\bf 86.9\%} which is higher than the result in \cite{toth2019bayesian} (83\%) and Bidirectional LSTM (Bi-LSTM) with 2 hidden layers (0.846\%). Moreover, CNN-Sig is a more efficient structure compare to Bi-LSTM in terms of training time and GPU memory usage.

\begin{table}[H]
\centering
\begin{tabular}{lcc}
\hline
  & Bi-LSTM & CNN-Sig \\ \hline
 Accuracy &    0.846(0.013)    & {\bf 0.869(0.002)} \\
Memory &  6.8 &  {\bf1.3}  \\ 
Time & 401.5 & {\bf292.5}\\ \hline
\end{tabular}
\caption{Testing accuracy, GPU memory usage(Gb) during training and total training time(s) on IMDB dataset.}
\label{imdb}
\end{table}

We believe that the CNN-Sig model is a good candidate for feature mapping and easy to be embraced into more complex models. By applying more complicated structure, such as using attention model for $\Phi$ and a sliding window, e.g., see \cite{morrill2020generalised}, for calculating a sequential signature transform, the accuracy can be improved.

\section{Conclusion}
\label{sec_conc}
Using the signature to summarize sequential data has been proved to be very efficient in the low dimensional cases. However, signature transform suffers from exponential growth of the number of features with respect to the path dimension. This makes both regression and classification problem impossible in practice.

In this paper, we proposed the Convolutional Signature (CNN-Sig) model to address this problem. By using a convolutional layer, we achieve a linear growth of the number of features and preserve all information simultaneously. The experiments show that this model can be a good candidate for classifying multi-dimension sequential data. Moreover, signature has been proved experimentally to be insensitive to missing values, this property may be useful in many natural language processing (NLP) tasks. The CNN-Sig model mitigates the high dimension problem and provides a possible way to apply the signature transforms.


\bibliographystyle{spmpsci}

\begin{thebibliography}{10}
\providecommand{\url}[1]{{#1}}
\providecommand{\urlprefix}{URL }
\expandafter\ifx\csname urlstyle\endcsname\relax
  \providecommand{\doi}[1]{DOI~\discretionary{}{}{}#1}\else
  \providecommand{\doi}{DOI~\discretionary{}{}{}\begingroup
  \urlstyle{rm}\Url}\fi

\bibitem{arribas2018derivatives}
Arribas, I.P.: Derivatives pricing using signature payoffs (2018)
\newblock Preprint is available at \url{https://arxiv.org/abs/1809.09466}

\bibitem{baydogan2015dataset}
Baydogan, M.: Multivariate Time Series Classification Datasets (2015).
\newblock Available at \url{http://mustafabaydogan.com}, [Accessed: 2020-07-12]

\bibitem{boedihardjo2014signature}
Boedihardjo, H., Geng, X., Lyons, T., Yang, D.: The signature of a rough path: Uniqueness.
\newblock Advances in Mathematics \textbf{293}, 720--737 (2016)

\bibitem{bollerslev1986generalized}
Bollerslev, T.: Generalized autoregressive conditional heteroskedasticity.
\newblock Journal of {E}conometrics \textbf{31}(3), 307--327 (1986)

\bibitem{Chevyrev_2016}
Chevyrev, I., Lyons, T.: Characteristic functions of measures on geometric
  rough paths.
\newblock Annals of Probability \textbf{44}(6), 4049--4082 (2016)

\bibitem{cho2014learning}
Cho, K., van Merrienboer, B., Gulcehre, C., Bahdanau, D., Bougares, F., Schwenk, H., Bengio, Y.: Learning phrase representations using RNN encoder-decoder for statistical machine translation (2014)

\bibitem{bff0e6bd8f4a4f0d9735bf1728fb43ef}
Cho, K., {van Merrienboer}, B., Gulcehre, C., Bougares, F., Schwenk, H.,
  Bengio, Y.: Learning phrase representations using rnn encoder-decoder for
  statistical machine translation.
\newblock In: Conference on Empirical Methods in Natural Language Processing
  (EMNLP 2014) (2014)
  \newblock
    \url{https://arxiv.org/abs/1406.1078}

\bibitem{cybenko1989}
Cybenko, G. Approximation by superpositions of a sigmoidal function. {\em Mathematics Of Control, Signals And Systems}. \textbf{2}, 303-314 (1989), https://doi.org/10.1007/BF02551274

\bibitem{rocket}
Dempster, A., Petitjean, F., Webb, G.I.: Rocket: exceptionally fast and
  accurate time series classification using random convolutional kernels.
\newblock Data Mining and Knowledge Discovery \textbf{34}(5), 1454--1495 (2020)

\bibitem{detering2018directed}
Detering, N., Fouque, J.P., Ichiba, T.: Directed chain stochastic differential
  equations.
\newblock Stochastic Processes and their Applications \textbf{130}, 2519--2551
  (2021)

\bibitem{engle1982autoregressive}
Engle, R.F.: Autoregressive conditional heteroscedasticity with estimates of
  the variance of united kingdom inflation.
\newblock Econometrica \textbf{50}, 987--1007 (1982)

\bibitem{FUNAHASHI1989183}
Funahashi, K.I.: On the approximate realization of continuous mappings by
  neural networks.
\newblock Neural Networks \textbf{2}(3), 183 -- 192 (1989)

\bibitem{gyurk2013extracting}
Gyurk\'o, L.G., Lyons, T., Kontkowski, M., Field, J.: Extracting information
  from the signature of a financial data stream (2013)
\newblock 
  \url{https://arxiv.org/abs/1307.7244}

\bibitem{hd_probability}
van Handel, R.: Probability in High Dimension.
\newblock APC 550 Lecture Notes. Princeton University (2016).
\newblock \urlprefix\url{https://web.math.princeton.edu/~rvan/APC550.pdf}

\bibitem{lstm}
Hochreiter, S., Schmidhuber, J.: Long short-term memory.
\newblock Neural Comput. \textbf{9}(8), 1735–1780 (1997)

\bibitem{hoeffding}
Hoeffding, W.: Probability inequalities for sums of bounded random variables.
\newblock Journal of the American Statistical Association \textbf{58}(301),
  13--30 (1963)

\bibitem{NIPS2019_8574}
Kidger, P., Bonnier, P., Perez~Arribas, I., Salvi, C., Lyons, T.: Deep
  signature transforms.
\newblock In: Advances in Neural Information Processing Systems 32, pp.
  3105--3115. Curran Associates, Inc. (2019).
\newblock \urlprefix\url{http://papers.nips.cc/paper/8574-deep-signature-transforms.pdf}


\bibitem{pmlr-v125-kidger20a}Kidger, P. \& Lyons, T. Universal Approximation with Deep Narrow Networks. {\em Proceedings Of Thirty Third Conference On Learning Theory}. \textbf{125} pp. 2306-2327 (2020,7,9), https://proceedings.mlr.press/v125/kidger20a.html


\bibitem{signatory}
Kidger, P., Lyons, T.: {Signatory: differentiable computations of the signature
  and logsignature transforms, on both CPU and GPU}.
\newblock Preprint is available at arXiv:2001.00706  (2020).
\newblock \urlprefix\url{https://github.com/patrick-kidger/signatory}


\bibitem{JMLR:v20:16-314}
Kiraly, F.J., Oberhauser, H.: Kernels for sequentially ordered data.
\newblock Journal of Machine Learning Research \textbf{20}(31), 1--45 (2019)

\bibitem{Levin2013LearningFT}
Levin, D.A., Lyons, T., Ni, H.: Learning from the past, predicting the statistics for the future, learning an evolving system (2013)
\newblock Preprint is available at 
\url{https://arxiv.org/abs/1309.0260}

\bibitem{liao2019learning}
Liao, S., Lyons, T., Yang, W., Ni, H.: Learning stochastic differential equations using {RNN} with log signature features (2019) 
  \newblock Preprint is available at 
  \url{https://arxiv.org/abs/1908.08286}

\bibitem{lyons2019numerical}
Lyons, T., Nejad, S., Arribas, I.P.: Numerical method for model-free pricing of  exotic derivatives using rough path signatures.
\newblock Applied Mathematical Finance \textbf{26}, 583--597 (2019)

\bibitem{lyons2019nonparametric}
Lyons, T., Nejad, S., Arribas, I.P.: Non-parametric pricing and hedging of  exotic derivatives.
\newblock Applied Mathematical Finance \textbf{27}, 457--494 (2020)

\bibitem{lyons_hq}
Lyons, T., Ni, H., Oberhauser, H.: A feature set for streams and an application
  to high-frequency financial tick data.
\newblock In: Proceedings of the 2014 International Conference on Big Data Science and Computing, BigDataScience 14. Association for Computing
  Machinery, New York (2014).
\newblock \urlprefix\url{https://doi.org/10.1145/2640087.2644157}

\bibitem{lyons2002system}
Lyons, T., Qian, Z.: System Control and Rough Paths.
\newblock Oxford mathematical monographs. Clarendon Press (2002)

\bibitem{LyonsTerryJ2007DEDb}
Lyons, T.J., Caruana, M., L\'evy, T.: Differential equations driven by rough
  paths.
\newblock In: Differential Equations Driven by Rough Paths: \'Ecole d'\'Et\'e de Probabilit\'es de Saint-Flour XXXIV - 2004, \emph{Lecture Notes in Mathematics}, vol. 1908, pp. 81--93. Springer Berlin Heidelberg (2007)

\bibitem{maas-EtAl:2011:ACL-HLT2011}
Maas, A.L., Daly, R.E., Pham, P.T., Huang, D., Ng, A.Y., Potts, C.: Learning word vectors for sentiment analysis.
\newblock In: Proceedings of the 49th Annual Meeting of the Association for
  Computational Linguistics: Human Language Technologies, pp. 142--150.
  Association for Computational Linguistics, Portland, Oregon, USA (2011).
\newblock \urlprefix\url{http://www.aclweb.org/anthology/P11-1015}

\bibitem{pmlr-v139-min21a}Min, M. \& Hu, R. Signatured Deep Fictitious Play for Mean Field Games with Common Noise. {\em Proceedings Of The 38th International Conference On Machine Learning}. \textbf{139} pp. 7736-7747 (2021,7,18), http://proceedings.mlr.press/v139/min21a.html

\bibitem{morrill2020generalised}
Morrill, J., Fermanian, A., Kidger, P., Lyons, T.: A generalized signature method for time series (2020)
\newblock Preprint is available at \url{https://arxiv.org/abs/2006.00873}

\bibitem{pennington2014glove}
Pennington, J., Socher, R., Manning, C.D.: Glove: Global vectors for word
  representation.
\newblock In: Empirical Methods in Natural Language Processing (EMNLP), pp.
  1532--1543 (2014).
\newblock \urlprefix\url{http://www.aclweb.org/anthology/D14-1162}

\bibitem{gp_pde}
Raissi, M., Perdikaris, P., Karniadakis, G.E.: Numerical {G}aussian processes
  for time-dependent and non-linear partial differential equations.
\newblock SIAM Journal on Scientific Computing \textbf{40}, A172--A198 (2018)

\bibitem{tsclass_2020}
Ruiz, A.P., Flynn, M., Large, J., Middlehurst, M., Bagnall, A.: The great
  multivariate time series classification bake off: a review and experimental
  evaluation of recent algorithmic advances.
\newblock Data Mining and Knowledge Discovery \textbf{35}(2), 401--449 (2021)

\bibitem{CLSY20}
Salvi, C. and Cass, T. and Foster, J. and Lyons, T., Yang, W.: The Signature Kernel Is the Solution of a Goursat PDE. 
\newblock SIAM J. Math. Data Sci. {\bf 3}, No. 3, pp. 873--899 (2021).

\bibitem{toth2019bayesian}
Toth, C., Oberhauser, H.: Bayesian learning from sequential data using {G}aussian processes with signature covariances (2019)
  \newblock Preprint is available at 
    \url{https://arxiv.org/abs/1906.08215}

\bibitem{NIPS2017_3f5ee243}
Vaswani, A., Shazeer, N., Parmar, N., Uszkoreit, J., Jones, L., Gomez, A.N.,
  Kaiser, L., Polosukhin, I.: Attention is all you need.
\newblock In: I.~Guyon, U.V. Luxburg, S.~Bengio, H.~Wallach, R.~Fergus,
  S.~Vishwanathan, R.~Garnett (eds.) Advances in Neural Information Processing
  Systems, vol.~30. Curran Associates, Inc. (2017).
\newblock
  \urlprefix\url{https://proceedings.neurips.cc/paper/2017/file/3f5ee243547dee91fbd053c1c4a845aa-Paper.pdf}

\bibitem{williams2006gaussian}
Williams, C.K., Rasmussen, C.E.: Gaussian processes for machine learning.
\newblock MIT press Cambridge, MA (2006)






\end{thebibliography}

\end{document}